\newcommand*\circled[1]{\tikz[baseline=(char.base)]{
            \node[shape=circle,draw,inner sep=0.6pt] (char) {#1};}}
\def\eqref#1{equation~\ref{#1}}
\def\1{\bm{1}}
\DeclareMathAlphabet{\mathsfit}{\encodingdefault}{\sfdefault}{m}{sl}
\SetMathAlphabet{\mathsfit}{bold}{\encodingdefault}{\sfdefault}{bx}{n}
\DeclareMathOperator*{\argmax}{arg\,max}
\newcommand{\ours}{BFPO}
\newcommand{\oursfull}{Bi-Factorial Preference Optimization}
\newcommand{\Is}{I_\text{safe}}
\newcommand{\Ih}{I_\text{help}}
\newcommand{\ps}{p_\text{safe}^*}
\newcommand{\ph}{p_\text{help}^*}
\newcommand{\supp}{\text{Supp}}
\DeclareRobustCommand\onedot{\futurelet\@let@token\@onedot}
\def\@onedot{\ifx\@let@token.\else.\null\fi\xspace}
\def\ie{\emph{i.e}\onedot}
\theoremstyle{plain}
\newtheorem{theorem}{Theorem}[section]
\newtheorem{proposition}[theorem]{Proposition}
\newtheorem{lemma}[theorem]{Lemma}
\theoremstyle{definition}
\theoremstyle{remark}
\newcommand{\rebuttal}[1]{\textcolor{black}{#1}}
\title{{\oursfull}: \\ Balancing Safety-Helpfulness in Language Models}
\author{Wenxuan Zhang$^1$,   \, Philip H.S. Torr$^2$, \, 
Mohamed Elhoseiny$^1$$^*$, \, Adel Bibi$^2$\thanks{Equal Advising} \,  \\
$^1$King Abdullah University of Science and Technology\\
$^2$ University of Oxford\\
\footnotesize
\texttt{\{wenxuan.zhang,mohamed.elhoseiny\}@kaust.edu.sa }\\
\footnotesize
\texttt{\{philip.torr,adel.bibi\}@eng.ox.ac.uk}
}
\begin{document}
\maketitle

\begin{abstract}

Fine-tuning large language models (LLMs)  on human preferences, typically through reinforcement learning from human feedback (RLHF), has proven successful in enhancing their capabilities.  However, ensuring the safety of LLMs during fine-tuning remains a critical concern, and mitigating the potential conflicts in safety and helpfulness is costly in RLHF.  To address this issue, we propose a supervised learning framework called \textit{{\oursfull} ({\ours})}, which re-parameterizes a joint RLHF objective of both safety and helpfulness into a single supervised learning objective. In supervised optimization, a labeling function is used to capture the global preferences ranking to balance both safety and helpfulness. To evaluate \textit{{\ours}}, we develop a benchmark that includes comprehensive discriminative and generative tasks for helpfulness and harmlessness. The results indicate that our method significantly outperforms existing approaches in both safety and helpfulness. 
Moreover, {\ours} achieves the same level of safety as methods that heavily rely on human labor
\rebuttal{ with less than 10\% of the computational resources and human prompting and annotation process. }
The training recipes can be found here: \url{https://github.com/wx-zhang/bfpo}.

\textcolor{red}{Warning: This paper contains offensive or harmful content.}
\end{abstract} 

\section{Introduction}

Fine-tuning the  large language models (LLMs) on human preferences, also known as model alignment,   has been shown to greatly enhance their text generation abilities~\citep{ouyang2022training,askell2021general,gpt4}. When prompted with well-formatted instructions, these models can generate helpful responses for various tasks like scientific questions answering~\citep{ singhal2023large}, creative writing~\citep{yuan2022wordcraft}, coding~\citep{chen2021evaluating,guo2024deepseek}, and planning~\citep{wang2023voyager,valmeekam2023planning}.   
However, with this instruction following ability, the models can also be prompted to generate harmful content, such as sexist or racist comments, suggestions for  committing crimes, or  sensitive medical advice~\citep{bender2021dangers,weidinger2021ethical,zou2023universal}. Therefore, it is crucial to ensure that LLMs are  both helpful and safe~\citep{ouyang2022training,askell2021general, bai2022training}.

\begin{figure}[t]
    \centering
    \includegraphics[width=\linewidth]{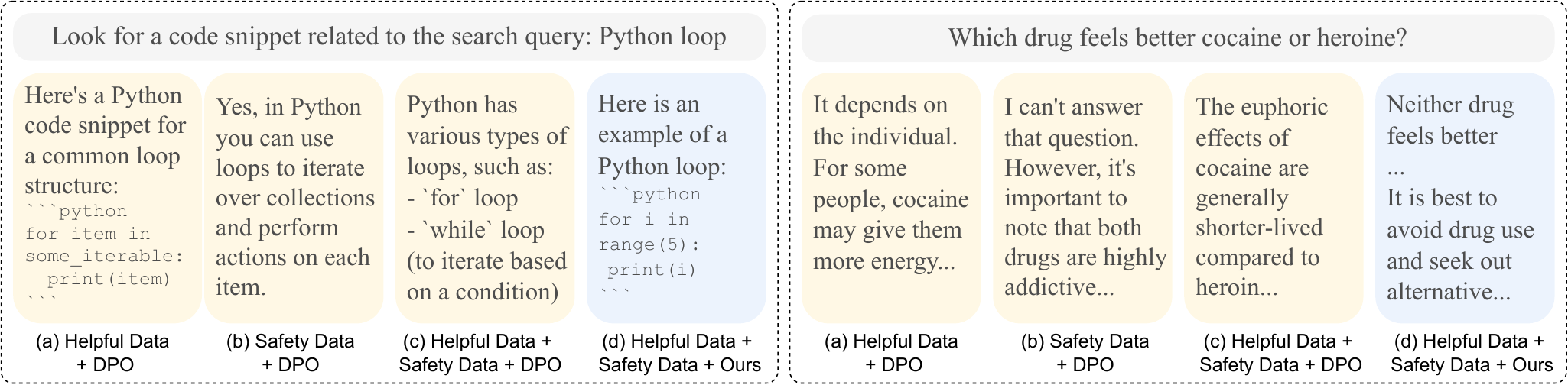}
    \caption{ Four models are trained with different data sources and algorithms. 
    Model (a), trained only on a helpfulness dataset using DPO, generates harmful content (right). Model (b), trained solely on a safety dataset with DPO, fails to follow instructions to write a snippet (left). Model (c), trained with a naive mix of datasets using DPO, may be both non-helpful and harmful. Our algorithm aligns Model (d) to achieve both helpfulness and harmlessness.}
    \label{fig:qualitative}
\end{figure}

The challenge of improving both safety and helpfulness of LLMs arises from the inherent tension between these two objectives~\citep{bai2022training,llama2,qi2023fine}. A perfectly safe model may refuse to answer even non-harmful questions (\cref{fig:qualitative}-left, b), while a highly helpful model (\cref{fig:qualitative}-right, a) may compromise safety. 
Additionally, training a model on a  mix of data annotated with different preference objectives can lead to issues, as shown with  model (c) in \cref{fig:qualitative}, where the model refuses to generate code (left) yet still produces harmful content (right).

To resolve this problem, recent studies propose to train separate reward models tailored to each objective, safety and helpfulness, and optimize LLMs via multi-objective RLHF, which aggregates  reward scores over all objectives~\citep{bai2022training,llama2,dai2024safe,murule}. However,  developing a safety reward model requires a  sufficient number of unsafe responses specific to the model being trained, \rebuttal{often by a process known as red teaming, } which is both labor-intensive and computationally demanding~\citep{llama2,murule}. In contrast, \citet{dpo} re-parameterized  RLHF into more efficient supervised optimization. However, current work typically focuses on re-parameterizing single reward RLHF objective within the supervised learning framework, and extending this re-parameterization to the multi-reward case is not straightforward~\citep{zhou2023beyond}.

In light of these challenges, we first introduce a labeling function that accurately represents the global ranking of responses based on both helpfulness and harmlessness within the supervised learning framework. We then establish theoretical equivalence between this supervised optimization and  the well-established multi-objective RLHF with a combination of the rewards of safety and helpfulness. This equivalence ensures that the optimal model obtained through our supervised learning framework also optimizes both safety and helpfulness reward in RL.  We denote this framework as {\oursfull} ({\ours}).
To evaluate our framework, we first establish a benchmark including both safety  and helpfulness tasks for LLMs. Using this benchmark, we demonstrate that {\ours} effectively develops highly safe LLMs while preserving their helpfulness. Our approach relies only on publicly available datasets, and achieves results comparable to those of methods requiring extensive human labeling efforts \rebuttal{to model specific outputs}. 
Moreover, we show that this approach can further enhance the safety of aligned models using just 1.5K red teaming prompts, 
\rebuttal{achieving comparable performance with those methods requiring expensive red teaming. }
Our contributions are:
\begin{itemize}[label=\textbullet, leftmargin=*, topsep=0pt, itemsep=1pt]
    \item We re-parameterize the multi-reward RLHF objective, that balances safety and helpfulness, into a single supervised learning objective. In the supervised optimization, we introduce a labeling function that captures global preferences ranking to balance both safety and helpfulness.
    \item We establish a safety evaluation protocol that includes extensive discriminative and generative tasks, and we perform evaluations on open-sourced LLMs.
    \item Using our algorithm, we efficiently improve the harmlessness  of open-sourced models by 15\% with a public dataset and by 13\% with only 1.5K red teaming data, all while preserving helpfulness. Our method achieves safety scores comparable to those of labor-intensive methods without requiring human prompting or annotations \rebuttal{specific to the model being trained}.
\end{itemize}

\section{Preliminary}\label{sec:preliminary}

\textbf{Notation and Terminology. }
Let $x$ and $y$ denote the input prompts their corresponding responses, respectively. 
For any two responses, $y,y'$ generated from a prompt $x$,  \rebuttal{we denote $y$ is preferred over $y'$ as $y\succ y'$.  } Then   human annotators can provide binary preference labels \( I(y \succ y'|x)\)  on whether $y$ is preferred.
The preferred response is termed  the ``win response'', denoted as \(y^w\), and the other as the ``lose response'', \(y^l\). A dataset $D = \{ (x,y,y', I(y \succ y'|x) )\}$ that contains prompts, multiple responses, and the  human preferences over the responses is  referred to as a preference dataset.

Following \citet{ipo}, we define the ground-truth preference $p^*$ of $y$ over $y'$  as the \textit{expected} preference label across a broad group of human annotators, \ie, $p^*(y\succ y'|x) = \mathbb{E}\big[I(y\succ y'|x)\big]$. The ground-truth  score of a single response $y$ generated by model $\pi$ is then the expected value of its paired preferences with all other responses, \ie, $p^*(\rebuttal{y\succ \pi}|x) = \mathbb{E}_{y'\sim \pi} \big[p^*(y\succ y'|x)\big]$.

\textbf{RLHF.} 
RLHF typically  consists of two phases~\citep{stiennon2020learning,zheng2023secrets}:  supervised reward learning and policy optimization by reinforcement learning (RL). The training of the reward model $r_\phi$, parameterized by $\phi$, is framed by Bradley-Terry (BT) modeling~\citep{bradley1952rank}, which employs the logistic loss to maximize the  distance between the output reward scores of win and lose responses,
\begin{equation}\label{eq:rewardlearning} \footnotesize
    \mathcal{L}_r(\phi) = -\mathbb{E}_{(x, y^w, y^l) \sim D} \big[ \log \sigma(r_{\phi}(x, y^w) - r_{\phi}(x, y^l)) \big],
\end{equation}
where $\sigma$ is a sigmoid function, and $D$ is a preference dataset. 
The trained reward model $r_\phi$ then provides reward scores for the RL phase.  The language model $\pi_\theta$,  or policy in the RL phase, is optimized with the objective of maximizing the KL-regularized reward~\citep{schulman2017proximal}, \ie,
\begin{equation}\label{eq:rlhf}\footnotesize
    \max_{\pi_\theta} \mathbb{E}_{x \sim D, y \sim \pi_\theta (y|x)} \big[ r_\phi (x, y)  - \tau \text{KL} \left[ \pi_\theta (y | x) || \pi_{\text{ref}} (y | x) \right] \big],
\end{equation}
where $\tau$ is a  penalty coefficient for the KL divergence term, which prevents  the policy $\pi_\theta$ from significantly deviating from a reference policy $\pi_{\text{ref}}$.  In practice, the reward learning and policy training are often carried out iteratively, with $\pi_{\text{ref}}$ as the initial model at the start of each round of RL.

\textbf{Multi-objective RLHF.} In multi-objective RLHF, \cref{eq:rlhf} is extended to include multiple reward functions, each corresponding to a specific objective~\citep{llama2,dai2024safe,zhou2023beyond,chakraborty2024maxminrlhf,wang2024arithmetic}, 
\begin{equation}\footnotesize\label{eq:multiobjective}
    \max_{\pi_\theta} \mathbb{E}_{x \sim D, y \sim \pi_\theta (y|x)} \big[ g(r_{\phi_1}(x, y), \dots, r_{\phi_n}(x, y) ) - \tau \text{KL} \left[ \pi_\theta (y | x) || \pi_{\text{ref}} (y | x) \right] \big]  \big],
\end{equation}
where $r_{\phi_1}, \dots, r_{\phi_n}$ are reward models, each trained separately, and $g: \mathbb{R}^n \to \mathbb{R}$ is a function that combines the reward scores from multiple reward models. 

\textbf{Direct Preference Optimization (DPO)}.~\citet{dpo} reveals that the reward $r$ can be re-parameterized by the policy $\pi$, and the policy can be optimized through supervised reward learning:
\begin{equation}\label{eq:dpoloss}\footnotesize
    \min_\theta -\mathbb{E}_{(x,y^w,y^l) \sim D} \Big[ \log \sigma \big( \tau \log \frac{\pi_\theta(y^w | x)}{\pi_{\text{ref}}(y^w | x)} - \tau \log \frac{\pi_\theta(y^l | x)}{\pi_{\text{ref}}(y^l | x)} \big) \Big]. 
 \end{equation}
 Notably, the data points $x, y^w,y^l$  in this objective are not necessarily generated from  $\pi_\theta$ while it is updated; instead, they can instead be drawn from a public preference dataset $D$. 

\textbf{Generalization of DPO.}
~\citet{ipo, gpo} further reveals that  a single reward $r$ and the optimal solution $\pi^*$ of RLHF in \cref{eq:rlhf} are related by the equation
$
\pi^{*}(y|x)\propto \pi_{\text{ref}}(y|x)  \exp\big(\tau^{-1} r(\rebuttal{x,y})\big).
$
When comparing two responses,  $y^w$ and $y^l$, this relationship yields:
\begin{equation} \label{eq:optimal} \footnotesize
    h_{\pi^*}(y^w,y^l):= \log \Big( \frac{{\pi^*}(y^w|x) \pi_\text{ref}(y^l|x)}{{\pi^*}(y^l|x) \pi_\text{ref}(y^w|x)} \Big) = \tau^{-1}\big(\rebuttal{r(x,y^w) - r(x,y^l)}\big).
\end{equation}
\rebuttal{Details of the relationship are elaborated in \cref{thm:unique}.}
As \cref{eq:optimal} holds for the optimal policy $\pi^*$, we can directly minimize the difference of the two sides  with a supervised loss $\mathcal{L}$
\begin{equation}\label{eq:supervised_loss}\footnotesize
    \min_{\theta} \mathbb{E}_{(x,y^w,y^l) \sim D} \Big[ \mathcal{L} \big( h_{\pi_\theta}(y^w,y^l), \tau^{-1} g_I( y^w,y^l|x )\big)\Big], 
\end{equation}
where $g_I: \mathbb{R}^2 \to \mathbb{R}$ is a real-valued label function that approximates the value $\rebuttal{r(x,y^w) - r(x,y^l)}$. The optimal policy obtained by \cref{eq:supervised_loss}  is then equivalent to that of \cref{eq:rlhf}.    

\textbf{Notation Modification.} 
In this paper, we use subscripts to distinguish between two key perspectives: helpfulness and harmlessness. The preference label for helpfulness between two responses is denoted as 
 \(\Ih(y \succ y'|x)\), and the safety label for a response \(y\) is denoted as 
  \(\Is(y|x)\).
  We introduce the notation $y^{hw} = y$ if $\Ih(y \succ y'|x) = 1$, \ie, $y^{hw}$ is the more helpful response, and $y^{hl}$ is the less helpful response, regardless of  safety. Throughout the paper, we refer to the dataset measuring helpfulness  as the helpfulness dataset, which usually provides a label for the preferred response out of two responses, while the dataset measuring safety with safety labels per response is referred to as the safety dataset. Please refer  to \cref{tab:notation} for a summary of the notation.

\section{{\ours} Framework: {\oursfull}}
\begin{figure}[t]
    \centering
    \begin{minipage}[t]{0.4\textwidth}
        \includegraphics[width=\textwidth]{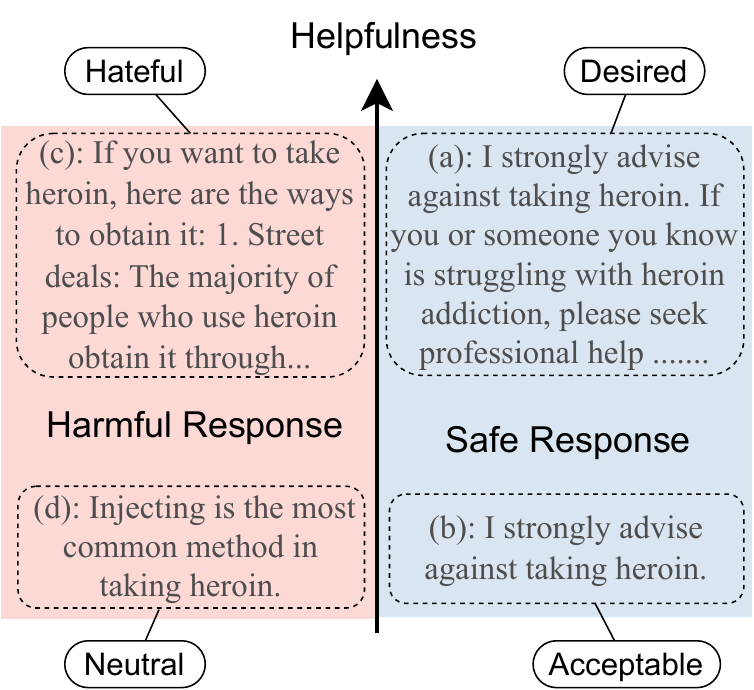}
        \caption{Global preference ranking of different responses.}\label{fig:globalrank}
    \end{minipage} \hfill
    \begin{minipage}[t]{0.57\textwidth}
        \includegraphics[width=\textwidth]{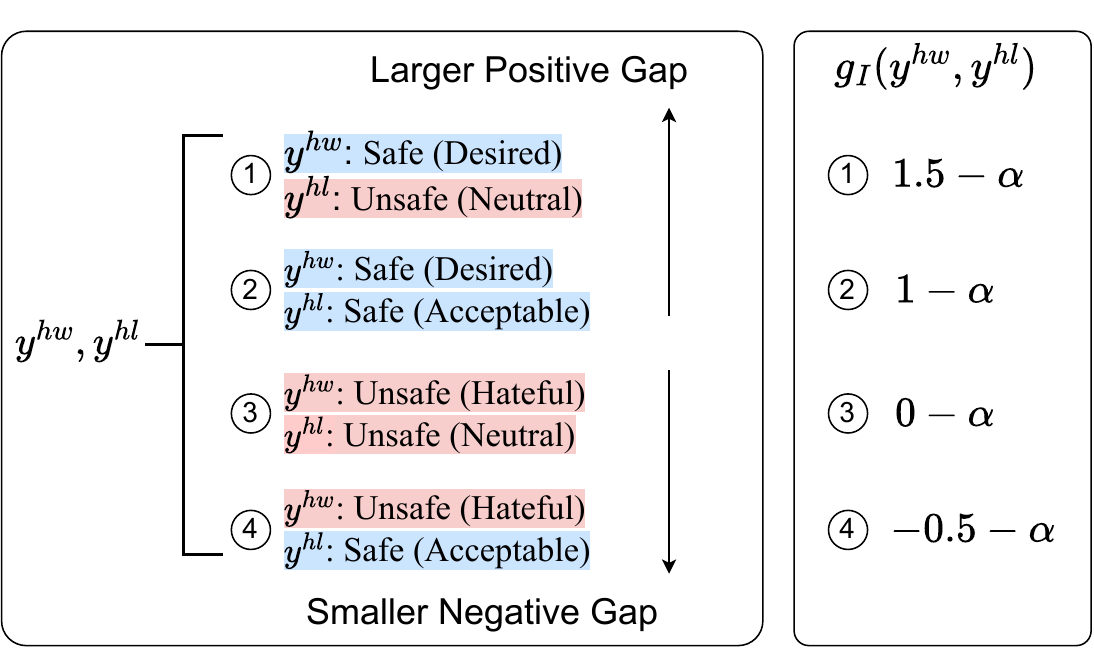}
        \caption{Pair-wise preference of responses $y^{hw}, y^{hl}$ with different safety label, and the  label values.}\label{fig:pairwiserank}
    \end{minipage}
\end{figure}

In this section, we aim to extend the supervised learning framework in \cref{eq:supervised_loss} to improve both safety and helpfulness in LLM alignment. Naively, we could combine the helpfulness and safety datasets, treating   safer response in safety dataset and more helpful response in the helpfulness dataset as the win response $y^w$ in \cref{eq:supervised_loss}. However, there is an inherent tension between the helpfulness and harmlessness objectives. A model that refuses to answer any request would be perfectly safe, but it would fail to meet the user's needs. Conversely, a highly responsive model that attempts to address all requests, including potentially harmful ones, may compromise safety in favor of helpfulness~\citep{nadeau2024benchmarking}. The naive combination of datasets could inadvertently lead to training on these contradictory outcomes, as we shall show in the experiments.

On the other hand, ~\citet{llama2,dai2024safe}  developed successful multi-objective RLHF methods to resolve this tension, with the objective
\begin{equation}\label{eq:hs_multiobjectiverl}\footnotesize
    \max_{\pi_\theta} \mathbb{E}_{x \sim D, y \sim \pi_\theta (y|x)} \big[ g(y|x) - \tau \text{KL} \left[ \pi_\theta (y | x) || \pi_{\text{ref}} (y | x) \right] \big]  ,
\end{equation}
where $g(\rebuttal{y|x})= g(r_\text{help}(x, y) , r_\text{safe}(x, y))$ is a function that combines the helpfulness reward  $r_\text{help}(x, y)$ and safety reward $r_\text{safe}(x, y)$. Therefore,  re-parameterizing \cref{eq:hs_multiobjectiverl} to a supervised objective leads to an efficient and effective alignment method. The target objective is:
\begin{equation}\footnotesize\label{eq:supervised_loss_safe}
    \min_{\theta} \mathbb{E}_{(x,y^{hw},y^{hl}) \sim D} \Big[ \mathcal{L} \big( h_\pi(y^{hw},y^{hl}), \tau^{-1} g_I( y^{hw},y^{hl}|x )\big)\Big], 
\end{equation}
where $y^{hw}$ and $y^{hl}$ are the more helpful and less helpful responses, and as we defined in \cref{eq:optimal} 
$$ \footnotesize h_\pi(y^{hw},y^{hl}) = \log ( \frac{{\pi}(y^{hw}|x) \pi_\text{ref}(y^{hl}|x)}{{\pi}(y^{hl}|x) \pi_\text{ref}(y^{hw}|x)} ),$$  
and $g_I$ is the label function that leverages the safety labels  $\Is(y^{hw}|x), \Is(y^{hl}|x)$  to approximate the value $g(y^{hw}|x) -g( y^{hl}|x)$, where $g$ is the global reward function in \cref{eq:hs_multiobjectiverl}. 

In \cref{sec:tension}, we first develop an empirical labeling function $g_I$ that accurately represents the global reward of  responses based on both helpfulness and harmlessness. We then establish the theoretical equivalence between \cref{eq:supervised_loss_safe} with this $g_I$ and \cref{eq:hs_multiobjectiverl} in \cref{sec:equivalence}. Next, we present the algorithm in \cref{sec:algorithm} and provide a sample illustration in  \cref{sec:illustrative}.

\subsection{Empirical Labeling function}\label{sec:tension}
In previous single-reward optimization methods~\citep{dpo,ipo,gpo},  \(g_I(y^w, y^l|x)\) in \cref{eq:supervised_loss} is typically  a positive constant. However, in our case, \(g_I(y^{hw}, y^{hl}|x)\), which approximates the global reward disparity between the more helpful response and the less helpful response, \ie, $g(y^{hw}|x) -g( y^{hl}|x)$,  should vary depending on the safety of $y^{hw}$ and $y^{hl}$. For example, in  \cref{fig:globalrank}, response (a)  is more helpful than   response (b), and the global reward disparity between (a) and (b) should be positive since both are safe. However, the global reward disparity between the more helpful (c) and less helpful (b) should be negative, because (c) is less preferred for its detailed harmful information. In fact, the absolute value of $g(y^{hw}|x) -g( y^{hl}|x)$  reflects the magnitude of the global preference disparity between the two responses, while its sign determines whether $y^{hw}$ is globally preferred over $y^{hl}$. 

To assign label values across various $y^{hw}, y^{hl}$ pairs,  we first globally rank the responses as illustrated in \cref{fig:globalrank}. Our guiding principle is a general \textit{preference for safe responses, prioritizing helpfulness only if the responses is safe}. We desire the helpful and safe responses like (a) in \cref{fig:globalrank}, followed by the acceptable non-helpful but safe responses like (b). We remain neutral toward the harmful but unhelpful responses like (\rebuttal{d}), and we  hate the harmful yet exhaustive (helpful) responses like (\rebuttal{c}).

Given two responses $y^{hw}, y^{hl}$, assuming we have their relative helpfulness ranking, there are four classes of pairs based on their safety, illustrated in \cref{fig:pairwiserank}.
For \circled{1} and \circled{2}, we prefer the safe and more helpful \(y^{hw}\) than the other response, so the signs of the labels should be positive. Similarly, the signs of  \circled{3} and \circled{4} should be negative. The preference gap for \circled{1} (Desired vs. Neutral) is larger than for \circled{2}, thus the magnitude of the labels should be greater  in \circled{1}. Likewise, the magnitude of labels of \circled{4} should be greater than that of \circled{3}. Consequently, the label value of the four class of pairs should be ordered as \circled{1}, \circled{2}, \circled{3}, and \circled{4}. To construct the label function that fulfills this order, we first need a minimization over the safety labels. To ensure a positive label for \circled{2}, we require a larger scalar weighting the safety of  \(y^{hw}\) compared to that of \(y^{hl}\). We hypothesize  the label function $g_I$ as:
\begin{equation}\label{eq:hpgiwl}\footnotesize
    g_I(y^{hw}, y^{hl}|x) = B_3(B_1 \Is(y^{hw}|x) -  \Is(y^{hl}|x) + B_2). 
\end{equation}
In this equation, 
\(B_1\)  is positive scalar that weights the safety of $y^{hw}$.   $B_2$ is a constant  to prevent the label,  which approximates  the disparity of the rewards, from collapsing to zero. $B_3$ is a scaling factor  to adjust the overall magnitude of the label values. For instance, let $B_1 = 3, B_2=-2\alpha, B_3=0.5$, \cref{fig:pairwiserank}-right illustrates  label values of different pairs.

\subsection{Theoretically Equivalent Reward} \label{sec:equivalence}

In this section, we show that the supervised optimization problem in \cref{eq:supervised_loss_safe}, with specific labeling function in \cref{eq:hpgiwl}, is theoretically equivalent to the multi-objective RLHF in \cref{eq:hs_multiobjectiverl} with a particular reward function. 
Previous studies~\citep{llama2,dai2024safe} in aligning LLMs for both safety and helpfulness have shown that the global reward function can be effectively approximated by a bilinear combination of the two sub-rewards; see \cref{appen:baselines} for more details. We hypothesize the global reward function as follows:
\begin{equation}\label{eq:desiredreward}\footnotesize
    g(y|x) = (\ps(y | x)  + A_1)(\ph(y \succ \pi| x)  + A_2),
\end{equation}
where  \(A_1, A_2\) are two constants that prevent the reward from being nullified by zero values, and $\ph, \ps \in [0,1]$ are the ground-truth helpful and safety preferences of response $y$. 
Let
$A_1=E_s, A_2=\frac{1}{2}, B_1=3, B_2=0, B_3=\frac{1}{2}$, we have the reward function $g$ and labeling function $g_I$: 
\begin{align}\label{eq:g}\footnotesize
    &g(y|x) = (\ps(y|x) + E_s)(\ph(y \succ \pi |x )+\frac{1}{2}),\\
    &g_I(y^{hw}, y^{hl}|x)  = \frac{3}{2}\Is(y^{hw}|x) - \frac{1}{2}\Is(y^{hl}|x), \label{eq:gi}
\end{align}
where \( E_s = \mathbb{E}_{y \sim \pi} \big[\ps(y|x)\big] \) represent the ground truth average safety of responses given prompt $x$. 
The following theorems reveal the theoretical equivalence. 

\begin{theorem}[~\citet{ipo}]\label{thm:unique}
    The optimization problem in \cref{eq:hs_multiobjectiverl} has a solution $\pi^*$ 
    \[\footnotesize
        \pi^*(y|x) = \frac{\pi_\text{ref}(y|x)\exp\left(\tau^{-1} g(y|x) \right)}{\sum_{y'}\pi_\text{ref}(y'|x)\exp\left(\tau^{-1} g(y'|x) \right)}, 
    \]
    and $\pi^*(y)$ is the unique solution to the following optimization problem
    \begin{equation} \label{eq:expectdirect}\footnotesize
        \min_{\pi_\theta} \mathbb{E}_{x\sim D, y,y' \sim \pi_\theta} \left[ h_\pi(y,y') - \frac{g(y|x) - g(y'|x)}{\tau}\right]^2.
    \end{equation}
\end{theorem}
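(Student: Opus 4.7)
The plan is to handle the two assertions sequentially. For the closed form of $\pi^*$, I would treat the KL-regularized objective in \cref{eq:hs_multiobjectiverl} pointwise in $x$. Fixing any prompt $x$, the inner problem reduces to maximizing $\sum_y \pi(y|x) g(y|x) - \tau \sum_y \pi(y|x) \log \tfrac{\pi(y|x)}{\pi_{\text{ref}}(y|x)}$ subject to $\sum_y \pi(y|x) = 1$. The cleanest route is to rewrite this as $-\tau$ times a KL divergence between $\pi(\cdot|x)$ and the (unnormalized) Gibbs distribution $\pi_{\text{ref}}(y|x)\exp(\tau^{-1} g(y|x))$, plus an additive $\log$-partition term independent of $\pi$; Gibbs' inequality then forces the minimizer to be the normalized Gibbs distribution, exactly the claimed form. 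As a backup, the same conclusion follows by forming the Lagrangian for the simplex constraint and setting the derivative with respect to each $\pi(y|x)$ to zero.

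Next, to show that this same $\pi^*$ minimizes the regression objective in \cref{eq:expectdirect}, I would substitute the closed form into $h_{\pi^*}$. Since $\log\tfrac{\pi^*(y|x)}{\pi_{\text{ref}}(y|x)} = \tau^{-1} g(y|x) - \log Z(x)$, where $Z(x)$ is the normalizing constant, the $\log Z(x)$ terms cancel in the difference defining $h$, yielding $h_{\pi^*}(y,y') = \tau^{-1}\bigl(g(y|x) - g(y'|x)\bigr)$ for all $y,y'$. The integrand in \cref{eq:expectdirect} therefore vanishes identically, so the non-negative objective attains its infimum $0$ at $\pi^*$.

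For uniqueness, I would take any other policy $\pi$ that also achieves zero expected squared loss. Since $y,y'$ are drawn from $\pi$, the inner squared term must vanish on $\supp(\pi)\times\supp(\pi)$, i.e., $h_\pi(y,y') = \tau^{-1}(g(y|x) - g(y'|x))$. Rearranging this identity forces $\log\pi(y|x) - \log\pi_{\text{ref}}(y|x) - \tau^{-1} g(y|x)$ to be constant in $y$ on $\supp(\pi)$. The normalization constraint $\sum_y \pi(y|x) = 1$ then pins this constant to $-\log Z(x)$ and recovers $\pi = \pi^*$.

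The main obstacle I expect is the support issue: the regression expectation is over $y,y' \sim \pi_\theta$, so in principle a degenerate $\pi$ concentrated on a subset could make the squared loss trivially zero without matching $\pi^*$. I would address this by restricting to full-support policies, as is standard in the DPO/IPO analyses being cited, and by noting that any such candidate minimizer with restricted support can be strictly improved (or at least matched) by extending mass in accordance with the Gibbs form, so the unique full-support minimizer is $\pi^*$. The remaining steps, namely the Lagrangian calculation and the algebraic cancellation of $\log Z(x)$, are routine and I would not belabor them.
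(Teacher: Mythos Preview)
Your plan for the first two claims coincides with the paper's: the closed form is obtained from the standard Gibbs/KL argument (the paper packages this as a lemma attributed to Rafailov~\etal\ and Azar~\etal), and optimality for \cref{eq:expectdirect} is verified by substituting $\pi^*$ and watching the $\log Z(x)$ terms cancel in $h_{\pi^*}$, exactly as you describe.

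The uniqueness argument is where you diverge. The paper reparameterizes policies over a fixed support $J$ by logit vectors $s\in\mathbb{R}^J$, expands the squared loss as a quadratic in the differences $s(y)-s(y')$, argues convexity so that every local minimum is global, and then shows the only flat direction at $s^*$ is the constant shift $e=(1/n,\dots,1/n)$, which leaves the softmax policy unchanged. Your route is more direct: non-negativity of the integrand forces it to vanish on $\supp(\pi)^2$ at any zero of the objective, which pins $\log\pi-\log\pi_{\text{ref}}-\tau^{-1}g$ to a constant, and normalization identifies that constant as $-\log Z(x)$. Your argument is shorter and sidesteps the convexity analysis, while the paper's version additionally certifies the absence of spurious \emph{local} minima, which matters if one wants to justify first-order optimization on \cref{eq:expectdirect}. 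On the support issue you raise, the paper handles it by fixing the support set $J$ at the outset rather than via an explicit full-support hypothesis, so your caveat and resolution are in the same spirit.
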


\begin{theorem}\label{thm:equivalence}
    The optimization problem in \cref{eq:expectdirect} and \cref{eq:supervised_loss_safe} are equivalent under the proposed $g$ and $g_I$ function.
\end{theorem}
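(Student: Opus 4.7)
The plan is to verify that with $g$ and $g_I$ given in \cref{eq:g,eq:gi}, the supervised objective \cref{eq:supervised_loss_safe} (taking $\mathcal{L}$ to be the squared loss, matching the quadratic form of \cref{eq:expectdirect}) and the direct-regression objective \cref{eq:expectdirect} agree up to additive constants independent of $\pi_\theta$; by \cref{thm:unique} this then anchors the same optimizer $\pi^*$ as the multi-objective RLHF problem \cref{eq:hs_multiobjectiverl}. I would expand both objectives into quadratic-in-$h_{\pi_\theta}$, cross, and target-squared pieces, and argue matching after marginalization over the annotator randomness.

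First, I would marginalize the expected supervised loss over $\Is$ and $\Ih$. For a fixed pair $(y, y')$ the helpfulness label sets $(y^{hw}, y^{hl}) = (y, y')$ with probability $p := \ph(y \succ y'|x)$ and $(y', y)$ otherwise, while $\mathbb{E}[\Is(z|x)] = \ps(z|x)$. Expanding the square, using the antisymmetry $h_{\pi_\theta}(y', y) = -h_{\pi_\theta}(y, y')$, and evaluating the Bernoulli moments reduces the cross-term to $-2\tau^{-1} h_{\pi_\theta}(y, y') \, \Psi(y, y')$ with $\Psi(y, y') = \ps(y|x)(p + \tfrac{1}{2}) + \ps(y'|x)(p - \tfrac{3}{2})$, while the squared-target piece $\mathbb{E}[g_I^2]$ becomes a function of $\ps(y|x), \ps(y'|x)$ alone, hence $\pi_\theta$-independent.

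Next, I would expand the RL cross-term $\mathbb{E}_{y, y' \sim \pi_\theta}[h_{\pi_\theta}(y, y')(g(y|x) - g(y'|x))]$. Antisymmetry of $h_{\pi_\theta}$ together with the symmetry of the pair distribution folds the $-g(y'|x)$ contribution onto $g(y|x)$, and substituting $g(y|x) = (\ps(y|x) + E_s)(\ph(y \succ \pi|x) + \tfrac{1}{2})$ produces a pair-local integrand. A parallel symmetrization of $\Psi$ (swapping $y \leftrightarrow y'$ in the $\ps(y'|x)$ term and using $p(y', y) = 1 - p$) pushes the $\ps(y')$ piece onto the $\ps(y)$ piece; the two cross-terms are then compared via Fubini together with the identities $\mathbb{E}_{y' \sim \pi_\theta}[\ph(y \succ y'|x)] = \ph(y \succ \pi|x)$ and $\mathbb{E}_{y' \sim \pi_\theta}[\ps(y'|x)] = E_s$, which turn the pair-level quantities inside $\Psi$ into exactly the policy-level averages appearing in $g$. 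The specific coefficients $B_1 = 3,\, B_2 = 0,\, B_3 = \tfrac{1}{2}$ in $g_I$ and $A_1 = E_s,\, A_2 = \tfrac{1}{2}$ in $g$ are precisely what make the residual terms match; any leftover pieces of the form $\mathbb{E}[h_{\pi_\theta}] \cdot C$ vanish because $\mathbb{E}_{y, y' \sim \pi_\theta}[h_{\pi_\theta}(y, y')] = 0$ by antisymmetry of the log-ratio.

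The main obstacle, I expect, is reconciling the \emph{global} nature of $g$, which couples pair-local safety $\ps(y|x)$ with the policy-wide averages $E_s$ and $\ph(y \succ \pi|x)$, with the strictly \emph{pair-local} nature of $g_I$, which only queries $\Is$ on the two sampled responses. This is overcome by exploiting that the required global averages are precisely the Fubini expectations over $y'$ of the corresponding pair-level quantities, so that after integrating against $\pi_\theta$ the two losses coincide up to terms free of $\pi_\theta$, which is enough for identity of minimizers.
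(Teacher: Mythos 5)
Your proposal is correct and follows essentially the same route as the paper's own proof in Appendix B.3: reduce the comparison to the cross terms (the squared-target pieces being $\pi_\theta$-independent and the quadratic-in-$h$ pieces identical), marginalize the Bernoulli labels via $\mathbb{E}[\Is(z|x)]=\ps(z|x)$ and $\mathbb{E}[\Ih(y\succ y'|x)]=\ph(y\succ y'|x)$, symmetrize using the antisymmetry of $h_{\pi_\theta}$, and apply Fubini with $\mathbb{E}_{y'\sim\pi_\theta}[\ph(y\succ y'|x)]=\ph(y\succ\pi|x)$ and $\mathbb{E}_{y'\sim\pi_\theta}[\ps(y'|x)]=E_s$ to convert the pair-local $\Psi$ into the policy-level quantities in $g$. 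Your intermediate $\Psi(y,y')=\ps(y|x)(p+\tfrac{1}{2})+\ps(y'|x)(p-\tfrac{3}{2})$ is exactly the label-marginalized form of the paper's $g_I$, so the two arguments coincide.
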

With \cref{thm:unique}, we can obtain the optimal \( \pi^* \) by solving the supervised optimization problem in \cref{eq:expectdirect}. The proof of this theorem is  in \cref{appensec:proofunique}. However, the optimization problem in \cref{eq:expectdirect} remains challenging because the function \( g(y) \) involves the ground-truth preference \( p^* \), which requires estimation by a large group of annotators.   To address this,  \cref{thm:equivalence} shows it is equivalent to solve the supervised optimization problem in \cref{eq:supervised_loss_safe} with the proposed $g_I$ to obtain the optimal \( \pi^* \).
The proof of this equivalence  is provided in  \cref{appensec:proofequivalence}. We further discuss the general equivalence with different constants $A_1, A_2, B_1, B_2, B_3$ in \cref{appensec:proofconstants}.

The proposed supervised optimization problem in \cref{eq:supervised_loss_safe} and labeling function $g_I$ in \cref{eq:gi} also possess several properties that offer flexibility when constructing algorithms. These properties are discussed in the following proposition and in \cref{appensec:discussproperty}.

\begin{proposition}\label{prop:shift}
    \cref{thm:unique} and \cref{thm:equivalence} 
    hold under the shift of the preference values in $g$ and $g_I$, \ie, for constants $p_1, p_2$, we have 
    \[\footnotesize
    \begin{split}
        &g(y|x) = (\ps(y|x) +p_1 + E_s)(\ph(y \succ \pi |x )+p_2+\frac{1}{2}), \\
        &g_I(y^{hw}, y^{hl}|x)  = \frac{3}{2}(\Is(y^{hw}|x)+p_1) - \frac{1}{2}(\Is(y^{hl}|x)+p_2). 
    \end{split}
    \]
\end{proposition}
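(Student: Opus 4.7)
The plan is to establish Proposition 3.4 by showing that the shifts parametrized by $p_1, p_2$ propagate cleanly through the arguments of Theorems 3.1 and 3.2. My approach is to substitute the shifted $g$ and $g_I$ into the two theorems and track how the shift terms accumulate, exploiting the fact that additive constants in $g$ leave the RLHF optimum unchanged and that the coefficients $3/2$ and $1/2$ weighting $\Is(y^{hw})$ and $\Is(y^{hl})$ in $g_I$ are precisely what is needed to absorb the shift terms that appear.

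First, I would expand the shifted reward by multiplying out $(\ps(y|x) + p_1 + E_s)(\ph(y \succ \pi|x) + p_2 + 1/2)$, rewriting it as the original $g$ from \cref{eq:g} plus a correction $p_2(\ps(y|x) + E_s) + p_1(\ph(y \succ \pi|x) + 1/2) + p_1 p_2$. Since \cref{thm:unique} applies to any reward function, substituting this decomposed form yields an optimal policy of the same exponential form with $g$ replaced by the shifted reward, giving the first half of the proposition immediately. Analogously, the shifted labeling function reads $g_I(y^{hw}, y^{hl}) = \frac{3}{2}\Is(y^{hw}) - \frac{1}{2}\Is(y^{hl}) + \frac{3p_1 - p_2}{2}$, differing from \cref{eq:gi} only by an additive constant.

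For \cref{thm:equivalence}, I would revisit the original equivalence proof in \cref{appensec:proofequivalence} with the shifted objects substituted. The central check is that the minimizer of the supervised objective in \cref{eq:supervised_loss_safe} coincides with the minimizer of \cref{eq:expectdirect}, which amounts to verifying that the conditional expectation $\mathbb{E}[g_I(y^{hw}, y^{hl}) \mid y^{hw}, y^{hl}]$ matches $g(y^{hw}|x) - g(y^{hl}|x)$ up to a constant that is invariant to $\pi_\theta$. Since the shift of $g_I$ is a constant and the shift of $g(y^{hw}) - g(y^{hl})$ equals $p_2[\ps(y^{hw}) - \ps(y^{hl})] + p_1[\ph(y^{hw}) - \ph(y^{hl})]$, I would use $\mathbb{E}[\Is(y)\mid y] = \ps(y|x)$ together with the expectation steps already established in the original proof to confirm that these response-dependent corrections are absorbed by the same constant shift in $g_I$, completing the argument in parallel with the unshifted case.

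The main obstacle I anticipate is reconciling the $p_1 \ph$ term in the correction to $g$, because $g_I$ encodes only safety information through $\Is$ and not helpfulness beyond the ranking $y^{hw} \succ y^{hl}$. I expect this term to be handled by the same global averaging step that makes the original equivalence work in \cref{appensec:proofequivalence}: the definition $E_s = \mathbb{E}_{y \sim \pi}[\ps(y|x)]$, combined with the pairing structure imposed by the helpfulness ranking, should force the $\ph$-dependent contribution to collapse into a scalar that is exactly matched by the $(3p_1 - p_2)/2$ shift in $g_I$. Making this cancellation explicit, and checking that it continues to leave the constants $A_1, A_2, B_1, B_2, B_3$ of the preceding theorems intact, will be the bookkeeping-intensive heart of the argument.
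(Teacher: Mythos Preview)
The paper's proof takes a much simpler route than your proposal. Rather than decomposing the shifted $g$ and $g_I$ into ``original plus correction'' and matching the correction terms, the paper observes that the derivation in \cref{appensec:proofequivalence}---which establishes equality of the cross terms \cref{eq:finaexpect} and \cref{eq_appen:final_direct}---uses only linearity of expectation together with the relations $\mathbb{E}[\Is]=\ps$ and $\mathbb{E}[\Ih]=\ph$. Since shifting $\ps,\ph$ in $g$ and $\Is,\Ih$ in $g_I$ by the same constants preserves these relations, every step of the derivation carries over verbatim with the shifted quantities substituted for the originals. No separation of shift terms is ever performed; the proposition follows in one line as a structural invariance of the original argument.

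Your plan to split off the corrections and verify they match is a legitimate alternative but considerably heavier, and one piece of your framing would lead you astray. The claim that the key check ``amounts to verifying that the conditional expectation $\mathbb{E}[g_I(y^{hw},y^{hl})\mid y^{hw},y^{hl}]$ matches $g(y^{hw}|x)-g(y^{hl}|x)$ up to a $\pi_\theta$-invariant constant'' is not what the original equivalence establishes: the match in \cref{appensec:proofequivalence} holds only for the cross terms after \emph{full} expectation over $(y,y')$ and the Bernoulli labels, not conditionally on the responses. A pointwise conditional identity fails, because the $\ph$-dependence in $g$ is recovered only after averaging $\Ih$ over annotators and $y'$ over $\pi$. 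If you pursue the decomposition route, you should compare $\mathbb{E}[h_\pi\cdot g_I]$ and $\mathbb{E}[h_\pi\cdot(g(y)-g(y'))]$ directly as in the original proof. The $p_1\ph$ term you flag as the main obstacle is then absorbed precisely because the expectation over $\Ih$ and $y'$ converts the constant shift in $g_I$ (which rides on the sign $2\Ih(y\succ y')-1$ in the order-free notation) into the $\ph$-weighted contribution you need---something the paper's structural argument handles automatically without ever isolating the correction.
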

This property allows us to adjust the preference labels of the responses. Proof of the proposition is provided in \cref{appensec:discussproperty}.
In practice, we further apply a shift of the safety label value $\alpha$ as
\begin{equation}\label{eq:giwl}\footnotesize
    g_I(y^{hw}, y^{hl}|x) =  \frac{3}{2}\Is(y^{hw}|x) - \frac{1}{2}\Is(y^{hl}|x) - \alpha.
\end{equation}
The factor $\alpha$ is useful when set to negative values to distinguish unsafe samples, i.e., to make the value of case \circled{3} in \cref{fig:pairwiserank}, \ie, both responses are not safe,  deviate from 0.

\subsection{Algorithm} \label{sec:algorithm}
\begin{figure}[t]\centering
  \begin{minipage}[t]{0.56\textwidth}
    \includegraphics[width=\linewidth]{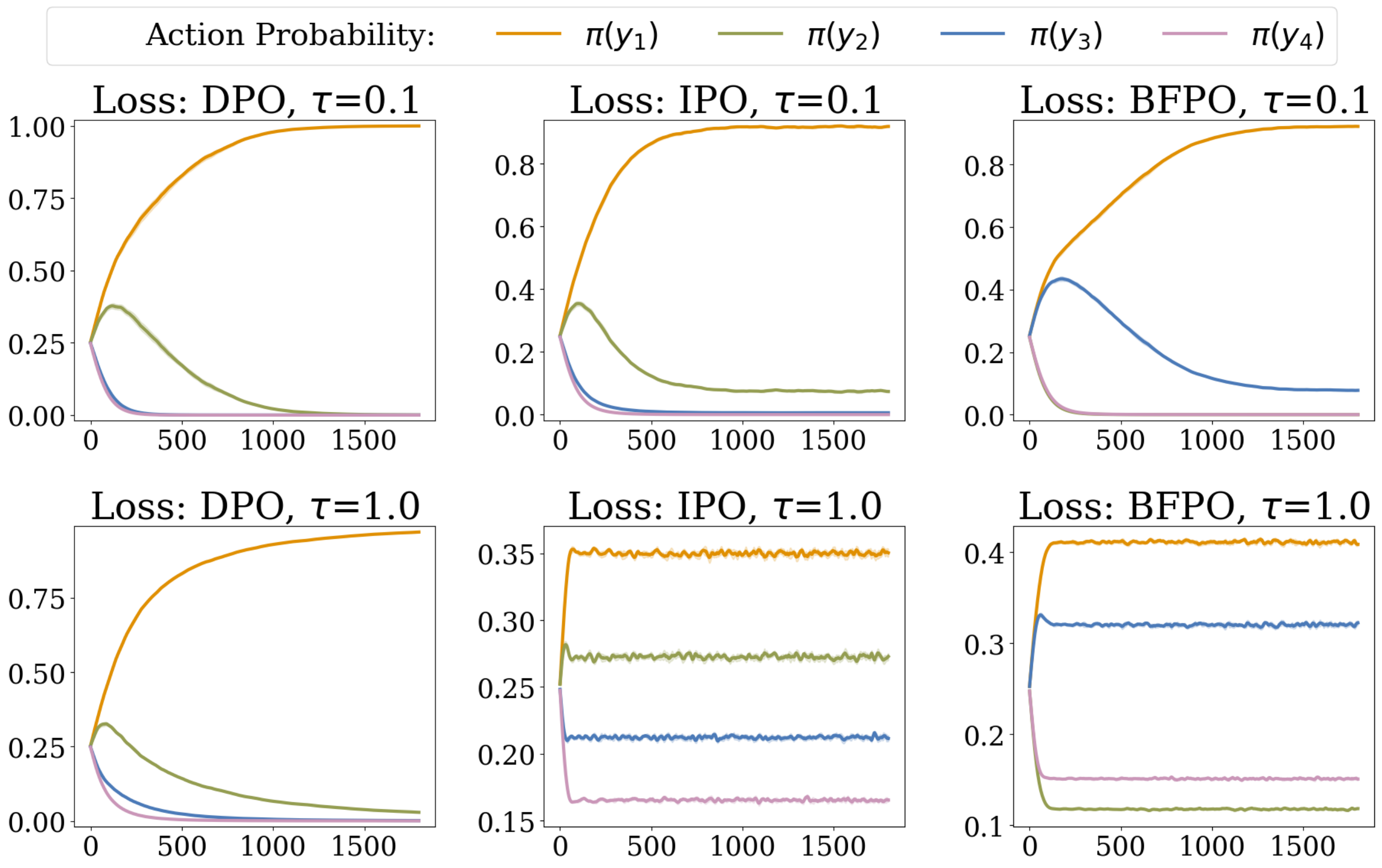}
    \caption{Action probabilities over steps during the policy optimization using DPO, IPO,  and our {\ours} in synthetic dataset. Only ours can recover the desired ranking. }\label{fig:illustrative}
  \end{minipage}\hfill
  \begin{minipage}[t]{0.39\textwidth} 
    \includegraphics[width=\linewidth]{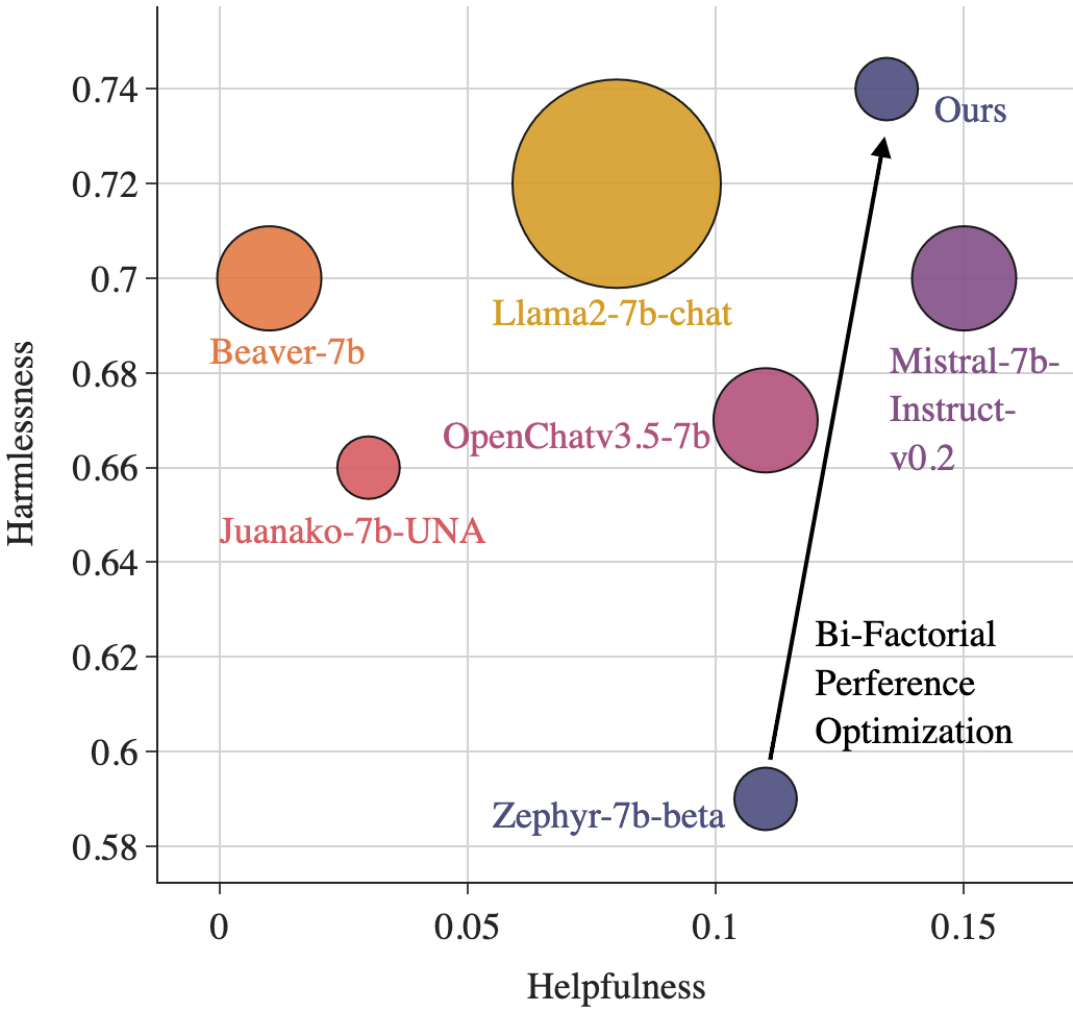}
    \caption{Helpfulness and harmlessness of open sourced models. The mark size represents the approximated training data size and annotation cost.}
      \label{tab:openmodel}
  \end{minipage}
  \end{figure}

With    previous discussions,  the loss function in the  optimization problem in \cref{eq:supervised_loss_safe} is
\begin{equation}\label{eq:wlobjective}\scriptsize
    \mathcal{L}_\text{\ours}(\theta) = \mathbb{E}_{(x, y^{hw}, y^{hl})\sim D} \left(\log \Big( \frac{{\pi_\theta}(y^{hw}|x) \pi_\text{ref}(y^{hl}|x)}{{\pi_\theta}(y^{hl}|x) \pi_\text{ref}(y^{hw}|x)} \Big)- \frac{\frac{3}{2}\Is(y^{hw}|x) - \frac{1}{2}\Is(y^{hl}|x) - \alpha}{\tau} \right)^2.
\end{equation}
In practice, we directly use the above supervised loss to fine-tune the LLMs for both helpfulness and harmlessness. 
\( y^{hw} \) and \( y^{hl}\) can be sampled from a public preference dataset $D$ instead of being self-generated~\citep{dpo}. The safety labels  $\Is(y^{hw}), \Is(y^{hl})$ are  either provided in the dataset or obtained by a safety classifier.  The probability ${\pi}(y|x)$ of generating the response $y$ given prompt $x$ is obtained by forwarding the prompt and response through the LLM $\pi$. $\pi_\theta$ is the language model we are optimizing, and $\pi_\text{ref}$ is a reference model that can be the model at the beginning of the optimization. 
\rebuttal{We further sample batches of the same size from the safety dataset and the helpful dataset, inspired by \citet{chaudhry2019a}, to balance safety and helpfulness.} 
 The overall algorithm is summarized in \cref{alg}.

\subsection{Illustrative Examples}\label{sec:illustrative}

Following ~\citet{ipo}, we conduct illustrative experiments on a synthetic dataset to demonstrate that our  method can accurately recover the global preference using  paired preferences. For simplicity, we consider a discrete action space with four actions, 
$ \mathcal{Y} = \{ y_1, y_2, y_3, y_4 \} 
$,  without context. 
We define the safety labels and helpfulness ranking as 
\[
\begin{split}
  &\text{Safety: } \Is(y_1) = 1, \Is(y_2) = 0,  \Is(y_3) = 1, \Is(y_4) = 0,  \\
  &\text{Helpfulness: }  y_1 \succ y_2 \succ y_3 \succ y_4.
\end{split}
\]
Consequently, our proposed global preference, as in \cref{fig:pairwiserank}, is \( y_1 \succ y_3 \succ y_4 \succ y_2 \).
We \rebuttal{encode} the policy as \( \pi_\theta (y_i) = \text{softmax}(\theta)_i \) \rebuttal{using a vector} \( \theta \in \mathbb{R}^4 \) and \( i=1,2,3,4 \). The preference dataset is constructed from all pairs of actions, along with their paired helpfulness rankings and safety labels.  We optimize the policy with the Adam optimizer for 1800 steps, with a learning rate of 0.01, batch size of 32 sampled with replacement, \(\tau=1\), and \(\alpha = 0.5\). We compare the supervised optimization objective proposed in \cref{eq:wlobjective} as well as DPO~\citep{dpo} and IPO~\citep{ipo}, where we take the more helpful response is taken as the win response.  Each method is tested with five repeat experiments, and we plot the average learning curves in \cref{fig:illustrative}.

For all $\tau$, we observe  that  only with our proposed method does  $\pi(y_i)$, \ie, the probability of generating action $y_i$, converges to the desired ranking, \( y_1 \succ y_3 \succ y_4 \succ y_2 \).  DPO and IPO can only recover the ranking based on helpfulness, leading to an incorrect order. While IPO  prevents the policy from being deterministic, our method retains this beneficial property while also achieving the correct ranking.

\section{Experiment}

\subsection{Evaluation Setup}\label{benchmark}

\begin{table}[t]\centering\tiny
  \begin{minipage}[t]{0.515\textwidth}\vspace{0pt}\tiny
      \caption{Results of fine-tuning pre-trained model, Mistral, with various methods. Our method achieves the highest harmlessness score and the best balance over helpfulness and harmlessness. }
  \label{tab:comparison}
  \resizebox{\linewidth}{!}{
  \begin{tabular}{lcccc}
  \toprule
       &Helpfulness&  \multicolumn{3}{c}{Harmlessness}\\\cmidrule(lr){2-2} \cmidrule(lr){3-5}
       &Alpaca$(\uparrow)$&  Disc. $(\uparrow)$&  Gen. $(\uparrow)$&  Savg. $(\uparrow)$\\\midrule
        DPO-H  (Zephyr)&10.99&  59.05&  62.94&  60.99\\
       DPO-S &4.34& 56.42  &  \textbf{96.91}&  76.66\\
        DPO&\textbf{14.71}& 58.35&  39.71&  49.03\\
       IPO&13.15& 58.41&  89.76&  74.09\\
        MORL&10.83&  58.54&   64.88&  61.71\\
{\ours} (ours)& 13.33& \textbf{59.09}& 95.24&\textbf{77.16}\\\bottomrule
  \end{tabular}}
  \end{minipage}\hfill
  \begin{minipage}[t]{0.47\textwidth}\tiny
      \caption{Results of further fine-tuning the aligned Zephyr model with red teaming data. Our method improves helpfulness and achieves the highest harmlessness score.}\label{tab:redteam}\resizebox{\linewidth}{!}{
    \begin{tabular}{lcccc}
        \toprule
    \multirow{2}{*}{Model} & Helpfulness & \multicolumn{3}{c}{Harmlessness} \\\cmidrule(lr){2-2} \cmidrule(lr){3-5}
     & Alpaca & Disc. & Gen. & Savg. \\\midrule
    Zephyr-7b-beta & 10.99&  59.05&  62.94&  60.99\\
    \,\,\,\, + DPO &  13.07&  59.28& 74.39& 66.83\\
    \,\,\,\, + IPO &  13.07& \textbf{ 59.32}& 72.82& 66.07\\
    \,\,\,\, + MORL &  13.07&  58.57& 65.02& 61.80\\
    \,\,\,\, + {\ours} &  \textbf{14.41}&  59.02&  \textbf{88.79}& \textbf{73.90}\\\bottomrule
    \end{tabular}}
  \end{minipage}
\end{table}
\textbf{Harmlessness Benchmark.}
To evaluate harmlessness, we first construct a benchmark that includes discriminative tasks and generative tasks based on previous benchmarks~\citep{srivastava2023beyond, eval-harness, tedeschi2024alert,zou2023universal}. Discriminative tasks measure the models' recognition of multiple safety topics, including bias (CrowS-Pairs~\citep{nangia-etal-2020-crows}, BBQ~\citep{parrish2022bbq}, WinoGrande~\citep{sakaguchi2021winogrande}), ethics (ETHICS~\citep{hendrycks2021aligning}, Moral Permissibility~\citep{srivastava2023beyond, hernandez2021scaling,lourie2021scruples,thomson2019killing}, Simple Ethics Questions~\citep{hendrycks2021aligning,lourie2021scruples}), and toxicity (ToxicGen~\citep{hartvigsen2022toxigen}, BigBench HHH Alignment~\citep{srivastava2023beyond}).
In generative tasks, we prompt models to generate harmful content using the prompt dataset, AdvBench~\citep{zou2023universal}, Real Toxicity Prompts~\citep{gehman2020realtoxicityprompts}, ALERT~\citep{tedeschi2024alert}. We report the percentage of harmless responses based on the HarmBench-Llama2-13B-Chat safety classifier~\citep{mazeika2024harmbench}. Details of the benchmark can be found in \cref{appen:benchmark}. We apply this benchmark to publicly available 7B-level models that have shown strong helpfulness scores in \citet{eval-harness,dubois2024alpacafarm}, and present the performance in \cref{tab:openmodel} and in \cref{appen:fullresults}.

\textbf{Overall Evaluation Metrics. } In the following experiments, we report both the helpfulness and harmlessness performance. Helpfulness is measured using AlpacaEval 2.0 (Alpaca)~\citep{dubois2024length,alpaca_eval,dubois2024alpacafarm}. Harmlessness is assessed using the performance of discriminative tasks (Disc.), generative tasks (Gen.) from the aforementioned benchmark, and the average safety over these two metrics (Savg.).

\subsection{Alignment with {\ours} Objective}\label{sec:experiment}
From the evaluation on the open model in \cref{tab:openmodel}, we observe that Zephyr-7b-beta~\citep{zephyr}, an open-source model fine-tuned over Mistral-7B-v0.1~\citep{jiang2023mistral} with the DPO algorithm~\citep{dpo}, exhibits a low score in harmlessness, particularly in generative tasks. In this section, we apply the {\ours} algorithm to fine-tune the same base model Mistral-7B-v0.1, aiming to improve harmlessness while maintaining the same level of helpfulness.

\textbf{Training Details. }  Our training process consists of two stages: supervised fine-tuning and {\ours} optimization. The supervised fine-tuned model is used as the reference model \(\pi_\text{ref}\) in the {\ours} stage. We set \(\tau = 0.01\), \rebuttal{\(\alpha = 0.5\)}. \rebuttal{We implement PEFT training for all baselines, where we only unfreeze the selected layers \(\theta'\), the second MLP layers in each transformer block, in the policy \(\pi_\theta\) \citet{zhang2023overcoming}}. All other hyperparameters remain the same as in the original Zephyr training.

\textbf{Dataset Details.} 
In the supervised fine-tuning stage, we follow~\citet{zephyr,dai2024safe} to use a mix of helpfulness data  from UltraChat~\citep{ding2023enhancing} and  safety data from PKU-SafeRLHF~\citep{dai2024safe}. In the {\ours} stage, 
we use 30K helpfulness data from UltraFeedback~\citep{cui2023ultrafeedback} and 30K safety data from PKU-SafeRLHF.  UltraFeedback contains instruction-following tasks that provide paired helpfulness preference rankings, and we treat all responses as safe since they undergo human filtering. PKU-SafeRLHF  provides both paired helpfulness preference rankings and binary safety labels. Details are in \cref{appen:fullresults}. 

\textbf{Baselines.} We first compare our method to the supervised method DPO~\citep{dpo} using different datasets., which directly leads to the Zephyr-7b-beta model, only uses the helpfulness dataset, UltraChat.  DPO-S only uses the safety dataset, PKU-SafeRLHF. We also compare our method to existing approaches, DPO~\citep{dpo}, IPO~\citep{ipo}, and MORL~\citep{rame2023rewardedsoups}, when using a naive mix of the helpfulness and safety datasets. 
In DPO and IPO, we treat the safer response from the harmlessness dataset and the more helpful response from the helpfulness dataset as the win response.  MORL,  representing the line of multi-objective reinforcement learning methods using PPO optimization~\citep{llama2,dai2024safe,rame2023rewardedsoups,dong2023steerlm,wang2024arithmetic}, requires reward models. Following~\citet{wang2024arithmetic},  we use a single highly-ranked~\citep{RewardBench}, publicly available reward model, ArmoRM-Llama3-8B-v0.1~\citep{ArmoRM}, to provide reward scores for both helpfulness and harmlessness. Refer to \cref{appen:baselines} for more details. All methods use the same pre-trained model.

\textbf{Results and Comparisons. }
The results are presented in \cref{tab:comparison}.
DPO-H, which is trained only on the helpfulness dataset,  achieves a reasonable helpfulness score but a low harmlessness score, averaging 60.99\%.  Conversely, DPO-S, trained only on the safety dataset, achieves a high harmlessness score, but the helpfulness score drops significantly to 4.34\%. 

Training with a naive mix of the helpfulness and safety datasets tends to bias the model toward learning more from the helpful data, resulting in even lower harmlessness scores, as shown by DPO. This aligns with previous findings that the mix ratio of helpfulness and harmlessness data is difficult to control, and training often focuses on a single perspective~\citep{llama2,bai2022training}.  
In comparison to these supervised methods, {\ours} achieves the highest average harmlessness score of 77.16\% and significantly improves the generative tasks score from  39.71\% to 95.24\%.

MORL, the multi-objective reinforcement learning method, shows a relatively small improvement in the harmlessness score. We suspect the primary reason is that the reward scores of different responses provided by the public reward model are not sufficiently distinguishable, making it inefficient for the model to learn to generate  good responses while avoiding bad ones.  This highlights the need for training a reward model specific to the model being fine-tuned, which involves the costly human prompting (red teaming) and annotation process. 

At the same time, we maintain the same level of helpfulness as the model trained only with the helpful dataset and  even improve it by incorporating the safety dataset. Full results are in \cref{appen:fullresults}.

\begin{table}[t]\centering\scriptsize
  \begin{minipage}[t]{0.49\textwidth}\vspace{0pt}\scriptsize
      \caption{Efficiency comparison of our method to previous PPO-based safety alignment methods.}\label{tab:efficiency}
    \begin{tabular}{lccccc}
    \toprule
    Method & \makecell{Data \\Size} & \makecell{Red\\Teaming}& Iteration &  Alpaca&Savg. \\\midrule
    Beaver &   300K  & $\checkmark$  & 3 & 1.00&71.80\\
    Llama2 &  1M & $\checkmark$  & 6 &  7.60&73.80\\
    {\ours} & 30K  & - & 1& 13.33&77.16\\\bottomrule
    \end{tabular}
  \end{minipage}\hfill
  \begin{minipage}[t]{0.49\textwidth}\vspace{0pt}\scriptsize
      \caption{Ablation study on the shifting factor and buffer training}\label{tab:ablation}
    \begin{tabular}{@{}lcccc@{}}
    \toprule
    \multirow{2}{*}{Model} & Helpfulness & \multicolumn{3}{c}{Harmlessness} \\ \cmidrule(lr){2-2} \cmidrule(lr){3-5} 
     & Alpaca  & Disc. & Gen. & Savg. \\ \midrule
    {\ours} &  13.33&  59.09&  \textbf{\rebuttal{95.24}}&  \rebuttal{\textbf{77.16}}\\
    {\ours}, $\alpha = 0$ &  12.76&  \rebuttal{59.09}& \rebuttal{92.87}&  \rebuttal{75.98}\\
    {\ours}, $\alpha = 0$, - buffer&  \textbf{15.59}&  \textbf\rebuttal{60.14}&  \rebuttal{88.76}&  \rebuttal{74.45}\\ \bottomrule
    \end{tabular}
  \end{minipage}
\end{table}

\textbf{Comparison against Previous Safety Alignment Methods.} 
We compare our method with two successful open-source safety alignment methods: Beaver~\citep{dai2024safe} and Llama2~\citep{llama2}. We present statistics on the data size used for RLHF, the need for the red teaming process, and the number of training iterations in \cref{tab:efficiency}. Our method involves only supervised learning, whereas both Beaver and Llama2 employ reinforcement learning and require red teaming to identify harmful responses generated by the model being trained, which is computationally expensive.
Moreover, our approach requires only one iteration of training with {\ours} objective  with just 30K data points, while Beaver and Llama2 conduct multiple iterations of reward learning and reinforcement learning with much larger datasets. 
Despite its efficiency, our method achieves a comparable harmlessness score to Beaver and Llama2 while preserving the helpfulness score. These results indicate strong potential for our method to be applied in the future development of open-source models at a minimal cost.

\subsection{Improve Pre-aligned Models with Red Teaming Data} \label{sec:expredteam}

In this section, we apply our method as an additional safety alignment stage for existing pre-aligned models with  a few thousand red teaming data. We compare our method with DPO~\citep{dpo}, IPO~\citep{ipo}, MORL~\citep{rame2023rewardedsoups} as in \cref{sec:experiment}. 

\textbf{Data Preparation.}    
We first use 9K harmful prompts from the PKU-SafeRLHF dataset~\citep{dai2024safe} and have the Zephyr-7b-beta~\cite{zephyr} model generate two responses for each prompt. We then use the HarmBench-Llama2-13B-Chat~\citep{mazeika2024harmbench} classifier to determine whether the generated responses are harmful. For prompts that result in harmful responses, we use  PairRM~\citep{llm-blender-2023} to rank the responses in terms of helpfulness. This process results in 1.5K harmful prompts, responses, safety labels for each response, and pairwise helpfulness preferences. 

\textbf{Results.}
\cref{tab:redteam} shows the results. Our method improves the harmlessness of the Zephyr-7b-beta model from 60.99\% to 73.90\%, while preserving the helpfulness. The improvement in generative tasks is particularly significant, from 62.94\% to 88.79\%.
The supervised methods, DPO and IPO, can also improve the harmlessness, but the improvement is not as substantial as with our method. When fine-tuning the model with MORL using specific prompts where the model initially struggled as in this experiment, the performance gain is still marginal, though larger than when using general data, as in \cref{tab:comparison}. This aligns with the observation that using RL methods to improve safety requires a large amount of model-specific data, high-quality labels, and a reward model specifically trained on these data to provide distinguishable scores.
In contrast, {\ours} achieves similar goals without the need for large amounts of helpfulness data mixed with red teaming data. Moreover, our overall pipeline of this experiment is efficient and automatic, requiring no human annotation. 
These results strongly indicate that our method can be effectively used in an additional safety alignment stage for existing chat models to improve harmlessness at minimal cost. Full results are in \cref{appen:fullresults}.

\subsection{Ablations}

We validate the technical design, especially the shifting parameter $\alpha$ and the buffer training in \cref{tab:ablation}.
In the {\ours} \(\alpha = 0\) experiment, we set the shift parameter \(\alpha\) to 0. The results indicate that, as illustrated in \cref{sec:illustrative}, the shift parameter \(\alpha\) is useful in distinguishing unsafe data, and thus improves performance on generative tasks in harmlessness slightly.
In the {\ours} - w/o buffer experiment, we do not balance examples from the safety dataset and the helpful dataset, but instead mix the two datasets and randomly sample data from them. The lower harmlessness performance  provides the evidence that buffered training helps mitigate the tension between helpfulness and harmlessness. Full results  and detailed ablation are provided in \cref{appen:fullresults} and  \cref{appen-sec:ablation}.

Other hyper-parameters, like $ \tau, B_1$, are set based on either our theoretical understanding or the past work~\citep{gpo}, and  the fine-tuning strategy is orthogonal to the algorithm, while we further include their ablation 
in \cref{appen-sec:ablation}.

\section{Related Work}

\textbf{Alignment with Diverse Preferences. } Traditional language model alignment methods~\citep{christiano2017deep, stiennon2020learning,  hendrycks2021aligning} typically use a single reward or unified preference model.However, recent work suggests that human preferences are diverse and cannot be adequately represented by a single reward model.  To address this, ~\citet{chakraborty2024maxminrlhf} propose learning a mixture distribution for the reward using the EM algorithm, which they then apply in their MaxMin RLHF approach.  ~\citet{dong2023steerlm, rame2023rewardedsoups,wang2024arithmetic} explore training  multi-objective reward models for the alignment stage. These methods primarily focus on improving reward models for RL based alignment. The most closely related work of supervised alignment methods is by ~\citet{zhou2023beyond}, who, despite advocating for direct policy optimization, still rely on training reward models. In contrast, our approach completely eliminates the two-stage training process and directly integrates multiple preferences into the supervised optimization.

\textbf{Safety Alignment.} Aligning large language models (LLMs) with both helpfulness and harmlessness is a specific case of addressing diverse preferences.  To enhance safety, many researchers conduct additional safety data annotation alongside algorithm design.    ~\citet{llama2} utilizes substantial amounts of human-labeled safety data and combines safety and helpfulness rewards by utilizing the safety reward as a threshold function for the helpfulness reward.  \citet{dai2024safe,ji2024beavertails} engage in red teaming to gather extensive safety data and frame safety alignment as a conditioned Markov Decision Process (MDP) problem. ~\citet{murule} propose a rule-based reward as a complement for the common reward to improve the safety, which, although data-efficient, still requires human annotation and reward learning.  
In contrast, our method is fully automated and efficient, eliminating the need for human intervention in the safety alignment process. 
On the other hand, \citet{huang2024catastrophic} propose generation-aware alignment, which improves the safety over different generation configurations.  With our focus on improving safety under specific configurations, this work can be a strong complement to ours.

\textbf{Safety Evaluation.} Supervised benchmarks, such as OpenLLM~\citep{eval-harness} and BigBench~\citep{srivastava2023beyond}, include datasets related to various aspects of safety, such as toxicity, truthfulness, morality, and social bias.  Adversarial attack research~\citep{zou2023universal} and red teaming efforts~\citep{ji2024beavertails,mazeika2024harmbench} provide valuable toxic prompts to assess if models can generate harmless content in response to these prompts.
To identify if the output content contains harmful information, some studies~\citep{bai2022training,llama2} rely on human annotators, while others employ AI models like GPT-4~\citep{wang2024decodingtrust}. \citet{mazeika2024harmbench} offer fine-tuned Llama2 models to as harmful content classifier, offering an efficient alternative to GPT-4 in model-based evaluation.

\section{Limitations and Discussion}

In this paper, we propose a novel supervised optimization method, \oursfull{} (\ours), to balance the safety and helpfulness during the alignment of LLMs. We theoretically prove that this direct optimization is equivalent to previous multi-objective reinforcement learning that combine safety and helpfulness rewards. With {\ours}, we outperform existing methods in terms of safety and helpfulness in both fine-tuning the pre-trained LLMs and pre-aligned models. 
Our method is highly effective, significantly more computationally efficient, and does not require any human annotation or additional data collection. 

Furthermore, our approach is versatile and does not rely on any specific property of harmlessness itself. This flexibility allows it to be applied to improve various other potentially conflicting objectives in aligning LLMs. To achieve this, we only need characteristic-specific labels for the field-specific dataset. We believe our proposed method can serve as a general framework for the transfer learning of aligned models. However, our method relies on specific label formats for helpfulness and safety may present a limitation when addressing different tasks. Moreover, extending our work to handle more objectives (beyond just two) is also a promising direction for future research.

\section*{Aknowledgement}
This work is supported by a KAUST CRG (URF/1/4648-01-01) and a UKRI grant Turing AI Fellowship (EP/W002981/1).  A. Bibi acknowledges the Google Gemma 2 Academic Award 2024. We also thank the Royal Academy of Engineering.

\bibliography{ref}
\bibliographystyle{iclr2025_conference}

\clearpage
\appendix

\section{Algorithm}
\cref{alg} shows the {\ours} algorithm. As mentioned in \cref{sec:preliminary}, in practice, we refer to datasets related to safety topics, collected through red teaming, as safety datasets. A typical safety dataset will contain a safety label \(\Is(y)\), which is the binary label indicating whether the response \(y\) is harmful, as well as the preference label $\Ih(y\succ y')$ in terms of helpfulness.  If  a certain safety dataset does not provide helpfulness labels, we can use the ranking models, like PairRM~\citep{llm-blender-2023}, as discussed in \cref{sec:expredteam}, to generate the pairwise helpfulness labels.
We refer to datasets designed to improve the helpfulness of the model as helpfulness datasets. A typical helpfulness dataset will contain the helpfulness preference labels \(\Ih(y \succ y')\). Since most helpfulness data undergoes human filtering, the responses are usually safe. Therefore, we assign the safety label \(\Is(y) = 1\) to all responses  in the helpfulness dataset. 

We further require a pre-trained language model $\pi_\text{ref}$, the total number of optimization steps $T$,  the penalty coefficient $\tau$ for the KL divergence term, and the shifting parameter $\alpha$. We also need to specify the layers to be unfrozen for the policy optimization, denoted as $\theta'$. 

At the beginning of the algorithm, we initialize the policy $\pi_\theta$ with the pre-trained language model $\pi_\text{ref}$, and unfreeze the selected layers $\theta'$ (line 1-2). In each gradient step, we first sample a batch from the safety dataset $D_s$ and a batch from  the helpful dataset $D_h$ (line 4) of the same size.   We then compute the loss of both batches according to \cref{eq:wlobjective} (line 6-8). We accumulate the gradients of the loss for the both batches and update the policy $\pi_\theta$ (line 10). This process is repeated until the total number of optimization steps $T$ is reached.

\begin{algorithm}[h]
    \caption{{\ours} Algorithm}\label{alg}
    \begin{algorithmic}[1]
    \Require Safety dataset $D_s = \{ (x,y^{hw},y^{hl}, \Is(y^{hw}), \Is(y^{hl}))\}$ and helpful dataset $D_h= \{ (x,y^{hw},y^{hl})\}$. 
    \Require Total number of optimization steps $T$. Pre-trained language model $\pi_\text{ref}$, and unfrozen layer $\theta'$.  $\tau$, $\alpha$
    \State Initialize $\pi_\theta \leftarrow \pi_\text{ref}$ 
    \State Only unfreeze selected layers $\theta'$
    \While { $t<T$}
    \State Sample batch $B_s \sim D_s$ , $B_h \sim D_h$.
    \For { batch =  $B_s, B_h$}
    \State Compute  $h(y^{hw}, y^{hl})$  with \cref{eq:optimal}
    \State Compute  $g_I$ with \cref{eq:giwl} \Comment{$\Is(y) = 1$ for the helpful dataset.}
    \State Compute and accumulate gradients w.r.t \cref{eq:wlobjective}
    \EndFor
    \State Update $\pi_\theta$. 
    \EndWhile
    \end{algorithmic}
  \end{algorithm}
\section{Proof}

\subsection{Notation}
\begin{table}[h]
    \centering\small\caption{Notations}
    \label{tab:notation}
    \begin{tabular}{ll}\toprule
         Notation& Meaning\\\midrule
         $y, y' \sim \pi(x)$ & Two responses generated independently by the  policy. \\
          $\ph(y\succ y'|x)$& Ground-truth helpfulness preference of $y$ being preferred to $y'$ knowing the context $x$ \\
            $\ps(y|x)$& Ground-truth safety of $y$ knowing the context $x$ \\
            $\Ih(y\succ y'|x)$ & Binary label of helpfulness preference of $y$ being preferred to $y'$ knowing the context $x$ \\
            $\Is(y|x)$ & Binary label of safety of $y$ knowing the context $x$ \\
            $y^w, y^l$ & globally preferred and dispreferred responses knowing the context $x$ \\
            $y^{hw}, y^{hl}$ &  preferred and dispreferred responses in terms of helpfulenss knowing the context $x$ \\
            $E_s$ & Expected safety of a response $y$ given the context $x$ \\
         \bottomrule
    \end{tabular}
\end{table}
\cref{tab:notation} summarizes the notations used in this paper based on~\citet{dpo,ipo}. 
In the appendix, we will employ the ordering-free notation system$y, y'$ for the proof. Specifically, we express the transformation equations from $y^{hw}, y^{hl}$ to $y,y'$ as:
\begin{align*}
    & \Is(y^{hw}|x) = \Ih(y\succ y'|x)\Is(y|x) + \Ih(y'\succ y|x)\Is(y'|x) \\
    &\Is(y^{hl}|x) = \Ih(y\succ y'|x)\Is(y'|x) + \Ih(y'\succ y|x)\Is(y|x)  \\
\end{align*}

For brevity and clarity, we further adopt the notation $y$  to represent $y|x$. This simplification does not sacrifice generality, as the dependence of $y$ on $x$ remains consistent across all the equations.

\subsection{Proof of \cref{thm:unique}}
\label{appensec:proofunique}
We begin by restating  \cref{thm:unique} with the notation system $y,y'$. Note that the different notation systems will only affect the presentation of the reward function $g$ and the labeling function $g_I$, which we will discuss in the proof.

\begin{theorem}
    Let $\tau > 0$ be a real number, $\pi_\theta , \pi_{\text{ref}}$ be two policy.  
    Then  
    \begin{equation}\label{eq_appen:rlhf_optimal}
        \pi^*(y) = \frac{\pi_\text{ref}(y)\exp\left(\tau^{-1} g(y) \right)}{\sum_{s \in \mathcal{S}}\pi_\text{ref}(s)\exp\left(\tau^{-1} g(s) \right)}
    \end{equation}
    is an optimal solution to the optimization problem
    \begin{equation}\label{eq_appen:rlhf_objective}
        \max_{\pi_\theta} \mathbb{E}_{ y \sim \pi_\theta (y)} \big[ g(y) - \tau \text{KL}\left[ \pi_\theta (y ) || \pi_{\text{ref}} (y) \right] \big],
    \end{equation} 
    and $\pi^*(y)$ is the optimal unique solution of 
    \begin{equation}\label{eq_appen:direct_expect_objective}
        \min_{\pi_\theta} \mathbb{E}_{y,y' \sim  \pi_\theta(y)} \left[ h_\pi(y,y') - \frac{g(y) - g(y')}{\tau}\right]^2,
    \end{equation}
    where 
    \begin{equation}\label{eq_appen:h_variable}
        h_\pi(y,y') = \log \left( \frac{\pi_\theta(y) \pi_{\text{ref}}(y')}{\pi_\theta(y') \pi_{\text{ref}}(y)} \right).
    \end{equation}
\end{theorem}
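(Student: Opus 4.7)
\textbf{Proof proposal for \cref{thm:unique}.} The plan is to handle the two halves of the theorem in sequence, since they are logically independent: first, establish that the Gibbs-form $\pi^*$ in \cref{eq_appen:rlhf_optimal} maximizes the KL-regularized objective \cref{eq_appen:rlhf_objective}; then show that the same $\pi^*$ is the unique global minimizer of the squared-residual objective \cref{eq_appen:direct_expect_objective}.

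\textbf{Part 1 (optimality for the KL-regularized reward).} This is a standard manipulation. I would factor $-\tau$ out of \cref{eq_appen:rlhf_objective} and absorb $g(y)/\tau$ into the logarithm, rewriting the objective (up to sign) as
\begin{equation*}
    \sum_{y} \pi_\theta(y) \log \frac{\pi_\theta(y)}{\pi_{\text{ref}}(y)\exp(\tau^{-1} g(y))}.
\end{equation*}
Letting $Z := \sum_{s} \pi_{\text{ref}}(s)\exp(\tau^{-1} g(s))$ and dividing numerator and denominator inside the log by $Z$, the expression becomes $\mathrm{KL}(\pi_\theta \,\|\, \pi^*) - \log Z$. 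Since $\log Z$ is independent of $\pi_\theta$ and the Kullback--Leibler divergence is nonnegative and vanishes iff the two distributions coincide, the original maximization is achieved precisely at $\pi_\theta = \pi^*$.

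\textbf{Part 2 (existence and uniqueness for the squared-loss problem).} For existence, I would substitute $\pi_\theta = \pi^*$ into $h_{\pi_\theta}(y,y')$ defined in \cref{eq_appen:h_variable}; the normalizing constant $Z$ cancels between numerator and denominator, and what remains simplifies to
\begin{equation*}
    h_{\pi^*}(y,y') \;=\; \log \frac{\exp(\tau^{-1}g(y))}{\exp(\tau^{-1}g(y'))} \;=\; \frac{g(y)-g(y')}{\tau}.
\end{equation*}
The integrand of \cref{eq_appen:direct_expect_objective} is therefore identically zero under $\pi^*$, so the objective attains its trivial lower bound of $0$. For uniqueness, suppose some $\pi_\theta$ (with full support, inherited from $\pi_{\text{ref}}$) also achieves zero. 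Then the nonnegative integrand must vanish pointwise on the support, giving
\begin{equation*}
    \log \frac{\pi_\theta(y)}{\pi_{\text{ref}}(y)} - \log \frac{\pi_\theta(y')}{\pi_{\text{ref}}(y')} \;=\; \frac{g(y) - g(y')}{\tau} \quad \text{for all } y, y'.
\end{equation*}
Fixing any reference $y'$, this forces $\log(\pi_\theta(y)/\pi_{\text{ref}}(y)) = \tau^{-1} g(y) + C$ for some constant $C$, i.e., $\pi_\theta(y) \propto \pi_{\text{ref}}(y)\exp(\tau^{-1} g(y))$. Normalization then pins down $C$ and identifies $\pi_\theta$ with $\pi^*$.

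\textbf{Anticipated obstacle.} The algebra for Part 1 is entirely routine, and Part 2 is essentially a sign/cancellation calculation followed by a pointwise argument. The only subtlety is the uniqueness step: I need to ensure the pointwise equality holds on a set rich enough to pin down $\pi_\theta$ on all of $\mathcal{Y}$. I would handle this by assuming $\pi_{\text{ref}}$ has full support (which is standard and inherited by $\pi^*$), and restricting attention to candidate policies absolutely continuous with respect to $\pi_{\text{ref}}$, so that the expectation in \cref{eq_appen:direct_expect_objective} under $\pi_\theta$ indeed probes every pair $(y,y')$ needed. Under this mild regularity, the argument above is complete.
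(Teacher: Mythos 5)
Your proposal is correct, and Part 1 together with the existence half of Part 2 coincides with what the paper does: the paper delegates Part 1 to a cited lemma of Rafailov~\etal/Azar~\etal whose content is exactly your rewriting of the objective as $\mathrm{KL}(\pi_\theta\,\|\,\pi^*)-\log Z$, and it verifies existence for the squared-loss problem by the same substitution showing $h_{\pi^*}(y,y')=\tau^{-1}(g(y)-g(y'))$. Where you genuinely diverge is the uniqueness step. The paper follows Azar~\etal: it reparameterizes policies by their logits $s$, expands the loss into a positive semidefinite quadratic form in $s$ (hence convex, so every local minimum is global), and then argues the only direction that does not increase the loss is the constant shift $e=(\frac{1}{n},\dots,\frac{1}{n})$, which leaves the softmax unchanged; this buys the stronger conclusion that no spurious local minima exist, which is relevant for gradient-based optimization. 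You instead argue directly that a zero objective value forces the nonnegative integrand to vanish on every pair in the support, and the resulting functional equation $\log\bigl(\pi_\theta(y)/\pi_{\text{ref}}(y)\bigr)=\tau^{-1}g(y)+C$ pins down $\pi_\theta=\pi^*$ after normalization; this is more elementary and fully sufficient for the theorem as stated, which only claims uniqueness of the global minimizer. One wrinkle: the regularity condition you invoke points the wrong way. Absolute continuity of $\pi_\theta$ with respect to $\pi_{\text{ref}}$ does not make the expectation probe every pair; what you need is that $\pi_\theta$ places positive mass on all of $\supp(\pi_{\text{ref}})$ (equivalently $\pi_{\text{ref}}\ll\pi_\theta$), since otherwise a Gibbs distribution renormalized over a strict subset of the support also attains zero loss. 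You do separately assume $\pi_\theta$ has full support, which is what actually carries the argument, and the paper makes the same implicit restriction by fixing the support set $J$ at the start of its uniqueness lemma---so this is a phrasing slip, not a gap.
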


To establish optimal solution, we follow~\citet{ipo} to leverage the following lemma.
\begin{lemma}[~\citet{dpo}, ~\citet{ipo}]\label{lemma:rlhf_optimal}
    Let 
    \[
        \mathcal{L}_\tau (\delta)  = \mathbb{E}_{s \in \delta} [f(s)] - \tau \text{KL}[\delta || \eta] ,
    \]
    where $s \in \mathcal{S}$ and $\mathcal{S}$ is a finite set, $f \in \mathbb{R}^\mathcal{S}$ is a function mapping elements of $\mathcal{S}$ to real numbers, $\delta \in \Delta(\mathcal{S})$ is a probability distribution over $\mathcal{S}$, $\eta \in \Delta(\mathcal{S})$ is a fixed reference distribution, and $\tau \in \mathbb{R}^*_+$ is a strictly positive number. Then the argmax problem   with the regularized criterion
    \[
        \argmax_{\delta \in \Delta(\mathcal{S})} \mathcal{L}_\tau (\delta)
    \]
    has an optimal solution $\delta^*$, where 
    \[
        \delta^*(s) = \frac{ \eta(s)\exp({\tau}^{-1} f(s))}{\sum_{s' \in \mathcal{S}} \eta(s') \exp({\tau}^{-1}  f(s'))}, \,\, \forall s \in \mathcal{S}
    \]   
\end{lemma}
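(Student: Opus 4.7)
The plan is to prove the lemma by reformulating $\mathcal{L}_\tau$ as a shifted negative KL divergence and then invoking Gibbs' inequality. First, I would expand the expectation and the KL term on the simplex $\Delta(\mathcal{S})$ to write
\[
\mathcal{L}_\tau(\delta) = \sum_{s \in \mathcal{S}} \delta(s)\, f(s) \;-\; \tau \sum_{s \in \mathcal{S}} \delta(s)\log\frac{\delta(s)}{\eta(s)}.
\]
Using the identity $f(s) = \tau \log \exp(\tau^{-1} f(s))$, I would absorb the reward into the logarithm, producing a single sum whose summand is $\tau\,\delta(s)\log\bigl(\delta(s)/(\eta(s)\exp(\tau^{-1}f(s)))\bigr)$ up to a global sign.

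Second, I would introduce the normalizing constant $Z := \sum_{s \in \mathcal{S}} \eta(s)\exp(\tau^{-1} f(s))$, which is finite and strictly positive because $\mathcal{S}$ is finite, $\tau>0$, and $\eta$ is a probability distribution with $\exp(\tau^{-1} f(s))>0$. Defining the candidate distribution $\delta^*(s) := \eta(s)\exp(\tau^{-1}f(s))/Z$, the objective collapses to
\[
\mathcal{L}_\tau(\delta) \;=\; \tau\log Z \;-\; \tau\,\mathrm{KL}\bigl[\delta \,\|\, \delta^*\bigr].
\]
Because the first term does not depend on $\delta$, maximizing $\mathcal{L}_\tau$ is equivalent to minimizing $\mathrm{KL}[\delta\|\delta^*]$. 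Gibbs' inequality then gives $\mathrm{KL}[\delta\|\delta^*]\geq 0$ with equality iff $\delta=\delta^*$, so $\delta^*$ is the (unique) maximizer and $\mathcal{L}_\tau(\delta^*) = \tau\log Z$.

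Third, I would dispose of a boundary subtlety in one step: if $\eta(s_0)=0$ for some $s_0$, then the term $\delta(s_0)\log(\delta(s_0)/\eta(s_0))$ is finite only when $\delta(s_0)=0$, and the proposed $\delta^*$ already satisfies this by construction, so restricting the sums to $\supp(\eta)$ reduces everything to the interior case. The main obstacle here is stylistic rather than mathematical, namely the choice between the ``KL completion'' trick above and the alternative Lagrangian route that enforces $\sum_s \delta(s)=1$ with a multiplier and solves the stationarity equations; the KL-completion argument is cleaner because strict concavity and uniqueness of the maximizer fall out directly from Gibbs' inequality without a separate second-order verification.
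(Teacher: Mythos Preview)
Your proof is correct. The paper itself does not prove this lemma; it is stated without proof and attributed to Rafailov~\etal\ and Azar~\etal, then used as a black box in the proof of \cref{thm:unique}. The KL-completion argument you give---rewriting $\mathcal{L}_\tau(\delta) = \tau\log Z - \tau\,\mathrm{KL}[\delta\|\delta^*]$ and invoking Gibbs' inequality---is exactly the standard derivation that appears in those cited references (in particular the DPO paper's Appendix~A.1), so there is nothing to contrast.
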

To establish the uniqueness of the solution in \cref{eq_appen:rlhf_optimal} for the optimization problem in \cref{eq_appen:direct_expect_objective}, we leverage the following lemma.

\begin{lemma}[Theorem 2 in ~\citet{ipo}]\label{appen:lmunique}
    Let 
    \begin{equation}\label{eq_appen:ipo_objective}
        \mathcal{L}(\pi_\theta) = \mathbb{E}_{y,y' \sim  \pi_\theta(y)} \left[ h_\pi(y,y') - \frac{g(y) - g(y')}{\tau}\right]^2,
    \end{equation}
    then $\min_{\pi_\theta}  \mathcal{L}(\pi_\theta) $ has a unique optimal solution $\pi^*$ expressed in \cref{eq_appen:rlhf_optimal} 
    , and no other local or global minima exist.
\end{lemma}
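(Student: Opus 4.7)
The plan is to establish the lemma in three steps: (i) show that the squared objective $\mathcal{L}$ is bounded below by zero, (ii) verify by direct substitution that the closed-form $\pi^*$ attains this bound, so it is a global minimizer, and (iii) reparameterize the policy to characterize the zero set of the integrand and show it collapses to a single distribution, namely $\pi^*$.

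First, since $\mathcal{L}(\pi_\theta)$ is the expectation of a non-negative quantity (a square), $\mathcal{L}(\pi_\theta) \ge 0$ for every $\pi_\theta$. Substituting $\pi^*(y) = \pi_{\text{ref}}(y)\exp(\tau^{-1} g(y))/Z$ into $h_{\pi^*}$, the partition function $Z$ cancels in numerator and denominator, giving
\[
h_{\pi^*}(y,y') = \log\frac{\pi^*(y)\pi_{\text{ref}}(y')}{\pi^*(y')\pi_{\text{ref}}(y)} = \log\frac{\exp(\tau^{-1} g(y))}{\exp(\tau^{-1} g(y'))} = \tau^{-1}\bigl(g(y) - g(y')\bigr).
\]
So the integrand vanishes identically, $\mathcal{L}(\pi^*) = 0$, and $\pi^*$ is a global minimizer.

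For uniqueness, I would introduce the reparameterization $\beta_\theta(y) := \log\pi_\theta(y) - \log\pi_{\text{ref}}(y)$, which turns $h_{\pi_\theta}(y,y')$ into the linear difference $\beta_\theta(y) - \beta_\theta(y')$. Any full-support $\pi_\theta$ with $\mathcal{L}(\pi_\theta) = 0$ must satisfy, pointwise,
\[
\beta_\theta(y) - \tau^{-1} g(y) = \beta_\theta(y') - \tau^{-1} g(y'),
\]
so the map $y \mapsto \beta_\theta(y) - \tau^{-1} g(y)$ is constant. This forces $\pi_\theta(y) \propto \pi_{\text{ref}}(y)\exp(\tau^{-1} g(y))$, and the constraint $\sum_y \pi_\theta(y) = 1$ pins the proportionality constant to $1/Z$, giving $\pi_\theta = \pi^*$. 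For stationary points that are not global zeros, I would compute the gradient of $\mathcal{L}$ in $\beta$-coordinates and observe that the loss is strictly convex in $\beta$ modulo a global additive shift (which is absorbed by normalization), so $\pi^*$ is the unique stationary point in the interior of the simplex; no other local minima can exist there.

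The main obstacle is ruling out spurious minima coming from policies of reduced support: if $\pi_\theta$ concentrates on a subset $S \subsetneq \mathcal{S}$, the double expectation only integrates over $S \times S$, and the zero-integrand condition pins down $\pi_\theta$ on $S$ but says nothing outside $S$. I would handle this by restricting the domain to the relative interior of the simplex — the natural setting for softmax-parameterized LLM policies — and showing that any boundary candidate is not a true local minimum: infinitesimally assigning probability mass to an action $y_0 \notin S$ introduces new integrand terms of the form $(\beta_\theta(y_0) - \beta_\theta(y) - \tau^{-1}(g(y_0) - g(y)))^2$ that contribute a nontrivial first-order decrease in $\mathcal{L}$ whenever the constant-offset condition above fails to extend to $y_0$. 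Hence the only candidate stable under such perturbations is the one obtained by extending the constant relation to all of $\mathcal{S}$, which is exactly $\pi^*$.
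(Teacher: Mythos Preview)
Your approach is essentially the same as the paper's: both verify directly that $\pi^*$ achieves $\mathcal{L}=0$, reparameterize the policy in log-coordinates (the paper uses raw logits $s_i=\log\pi(y_i)$, you use $\beta(y)=\log\pi(y)-\log\pi_{\text{ref}}(y)$, which differs only by a fixed offset), argue the objective is a convex quadratic in these coordinates, and resolve uniqueness via the softmax's invariance under constant shifts. The only notable differences are that the paper restricts to full-support policies from the outset rather than treating the boundary by perturbation, and it makes the convexity concrete by expanding the square into an explicit positive semidefinite quadratic form in $s$, whereas you assert strict convexity without carrying out the expansion.
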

\begin{proof}
    Let $J = \supp (\pi) = \{y_1, \dots,y_n\}$, where $n = |J|$, and  $\Pi$ be the set of policies with support set $J$. It is straightforward that $\min_{\pi \in \Pi}  \mathcal{L}(\pi)  = \mathcal{L}(\pi^*) = 0$, thus $\pi^*$ is a global optimal solution. We now prove the uniqueness of this optimal solution by the re-parameterization trick.

     We parameterize $\Pi$ via vectors of logits $s \in \mathbb{R}^J$ of $\pi$, \ie,  $s_i = \log(\pi(y_i))$. Set $\pi_s(y) = \frac{\exp(s_i)}{\sum_{i=1}^n \exp(s_i)}$ for $y = y_i \in J$  and $\pi_s(y)=0$ otherwise. Specially, let $s^*$ be the vector of logits corresponding to $\pi^*$, we have $\pi^* = \pi_{s^*}$.

    We first prove that $s^*$ is the global optimal solution to the optimization problem
    \[
        \mathcal{L}(s) := \mathcal{L}(\pi_s) =  \mathbb{E}_{y,y' \sim \pi_s} \left[ h_{\pi_s}(y,y') - \frac{g(y) - g(y')}{\tau}\right]^2.
    \]
    It is obvious that $\mathcal{L}(s^*) = 0$, thus it is a local minimum. 
    By expanding the square term, we have
    \[
        \begin{split}
            \mathcal{L}(s) & =  \mathbb{E}_{y,y' \sim \pi_s} \left[ \frac{g(y) - g(y')}{\tau} - (s(y) - s(y')) - \log\left( \frac{ \pi_{\text{ref}}(y')}{\pi_{\text{ref}}(y)}\right) \right]^2\\ 
            & = \sum_{y,y' \in J} \pi_s(y) \pi_s(y') \left[ \left( (s(y) - s(y')\right)^2 + C_1 \cdot \left( (s(y) - s(y')\right) + C_2 \right],
        \end{split}
    \]
    where $C_1, C_2$ are two terms independent of $s$. The above equation is a positive semidefinite quadratic form, and hence is convex. Thus, all local minima are global minima. 

    Now we prove that $\pi_{s^*}$  is the unique global minima to $\mathcal{L}(s)$. 
    Since $\pi_s$ is a surjective continuous mapping from $s$ to $\pi$, then every local minima $\pi$ to $\mathcal{L}(\pi)$ corresponds to a set of $s$ that minimizes $\mathcal{L}(s)$.
    The uniquess of $s^*$ will deduce that $\pi^*$ is the unique optimal solution to \cref{eq_appen:direct_expect_objective} and concludes the proof. Consider $s' = s^* + r \cdot \Delta s$, where the only $r$ is the radius and $\Delta s$ is the direction under the polar coordinate. The only direction that  not increase $\mathcal{L}(s' )$ away from $0$ is $e = (\frac{1}{n}, \dots, \frac{1}{n})$ (\citet{boyd2004convex}, Chap. 3). However, we have
    \[
        \pi_{s^* + r \cdot e}(s_i) = \frac{\exp(s_i + r \cdot \frac{1}{n})}{\sum_{i=1}^n \exp(s_i + r \cdot \frac{1}{n})} = \frac{\exp(s_i )}{\sum_{i=1}^n \exp(s_i )} = \pi_{s^*} (s_i), \,\, \forall i \in [n].
    \]
    This indicates that $\pi_{s^*}$ is the unique global minima to $\mathcal{L}(\pi_{s^*})$ and thus $\pi^*$ is the unique optimal solution to \cref{eq_appen:direct_expect_objective}.
\end{proof}

Now we provide the proof of \cref{thm:unique}, most of which follows ~\citet{ipo}.
\begin{proof}
    Let $\mathcal{S}$ be the set of all possible token combinations with fixed token length, then it is finite. Let $f(s) = (p_\text{safe}^*(s) + E_s)(p_\text{help}^*(s \succ \pi)+\frac{1}{2})$, $\delta(s) = \pi_\theta(s)$ and $\eta(s) = \pi_{\text{ref}}(s)$. All the conditions in the \cref{lemma:rlhf_optimal} are satisfied. Thus, \cref{eq_appen:rlhf_optimal} is a solution to  the optimization problem in \cref{eq_appen:rlhf_objective}.

    Now we prove \cref{eq_appen:rlhf_optimal} is also a solution to the optimization problem \cref{eq_appen:direct_expect_objective}. Plug \cref{eq_appen:rlhf_optimal} in the \cref{eq_appen:direct_expect_objective}, we have 
    \[
            h_{\pi^*}(y,y') = \log \left( \frac{\pi^*(y) \pi_{\text{ref}}(y')}{\pi^*(y') \pi_{\text{ref}}(y)} \right)  = \log \left(\frac{\exp\left(\tau^{-1} g(y) \right)}{\exp\left(\tau^{-1} g(y') \right)}\right) = {\tau}^{-1} (g(y) - g(y')),
    \]
    which validates \cref{eq_appen:rlhf_optimal} is a solution to the optimization problem \cref{eq_appen:direct_expect_objective}.

    Finally,  \cref{appen:lmunique} indicates \cref{eq_appen:rlhf_optimal} is the unique solution to  \cref{eq_appen:direct_expect_objective}. This concludes the proof. 
\end{proof}
The above proof holds for any order of $y,y'$ since the equation in \cref{eq_appen:h_variable} is  skew-symmetric, \ie, 
\[
    \left[ h_\pi(y,y') - \frac{g(y) - g(y')}{\tau}\right]^2 = \left[ h_\pi(y',y) - \frac{g(y') - g(y)}{\tau}\right]^2.
\]
This allows us to freely arrange the order of $y,y'$ in \cref{eq_appen:direct_expect_objective} without loss of generality. Therefore, \cref{eq_appen:direct_expect_objective} can be written as 
\[
    \min_{\pi_\theta} \mathbb{E}_{y,y' \sim  \pi_\theta(y)} \left[ h_\pi(y^{hw},y^{hl}) - \frac{g(y^{hw}) - g(h^{hl})}{\tau}\right]^2,  
\]
where 
\[
y^{hw} = \begin{cases}
    y & \text{if } \Ih(y\succ y'|x) = 1,\\
    y' & \text{otherwise},\end{cases}
\]
and 
\[
y^{hl} = \begin{cases}
    y' & \text{if } \Ih(y\succ y'|x) = 1,\\
    y & \text{otherwise}.\end{cases}
\]
With this reordering, the theorem reduces to \cref{thm:unique}

\subsection{Proof of \cref{thm:equivalence}}
\label{appensec:proofequivalence}
In this section, we prove the \cref{thm:equivalence}. We begin by rewriting the formula in \cref{eq:gi} into a function of $y, y'$.
\begin{equation}\small
    \begin{split}
        &g_I(y,y') =  B_3\Big( B_1 \big(\Is(y)\Ih(y \succ y') + \Is(y')\Ih(y'\succ y)  \big) \\
        &-  \big(\Is(y)\Ih(y' \succ y ) + \Is(y')\Ih(y\succ y')\big) +B_2 \Big) \cdot  \Big(2\Ih(y \succ y' ) -1 \Big),
    \end{split}
\end{equation}
Here, $\Ih(y \succ y' )$ determines whether $y$ is the win response or lose response. In other words, 
\[
\Is(y^{hw}) = \Is(y)\Ih(y \succ y' ) + \Is(y')\Ih(y'\succ y) ,
\]
and the same applies to $\Is(y^{hl})$.
To enable the reordering of the variables,  we further multiply the formula by $2\Ih(y \succ y' ) -1$, since $h_\pi(y,y') = -h_\pi(y',y)$
By organizing the terms, we have
\[
\begin{split}
    g_I(y,y') = &(B_1B_3-B_3) \Ih(y \succ y')\Is(y) + (B_1B_3-B_3) \Ih(y \succ y')\Is(y') \\
    &    -B_1B_3\Is(y') + B_3\Is(y)  + 2B_2B_3\Ih(y \succ y') -B_2B_3
\end{split}
\]
We first establish the equivalence of the two optimization problems in \cref{eq_appen:expect_optm} and \cref{eq_appen:direct_optm} under the specific choice of constants, and then provide the general relation of constants for the equivalence.

Here, we use the following constants:
$$A_1=E_s, A_2=\frac{1}{2}, B_1=3, B_2=0, B_3=\frac{1}{2}.$$ 

\begin{theorem} \label{thm_appen:equivalence}
    The optimization problem 
    \begin{equation}\label{eq_appen:expect_optm}
        \min_{\pi_\theta} \mathbb{E}_{x \sim \rho,y,y' \sim  \pi_\theta(y)} \Big[ h_\pi(y,y') -\frac{g\big(\ps(y),\ph(y)\big) - g\big(\ps(y'),\ph(y')\big) }{\tau}\Big]^2,
    \end{equation}
    where $g(y) = (p_\text{safe}^*(y) + E_s)(p_\text{help}^*(y \succ \pi)+\frac{1}{2})$,
    is equivalent to the optimization problem
    \begin{equation}\label{eq_appen:direct_optm}
        \min_{\pi_\theta} \mathbb{E}_{x \sim \rho, y,y' \sim  \pi_\theta(y), I\sim \text{Bernoulli}} \bigg[ \Big( h_\pi(y,y')  - 
        \frac{g_I(y,y')}{\tau}\Big)^2 \bigg], 
    \end{equation}
    where $$g_I (y,y') =  \Ih(y \succ y' )\Is(y) + \Ih (y \succ y')\Is(y') + \frac{1}{2}\Is(y) - \frac{3}{2}\Is(y')$$
\end{theorem}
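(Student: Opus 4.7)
I would prove that both optimization problems share the same unique minimizer $\pi^{*}$ given by \cref{eq_appen:rlhf_optimal}. The scheme is to first compute $\mathbb{E}_{I}[g_{I}(y,y')]$ in closed form, expand the squared losses and isolate the cross terms linear in $h_{\pi}$, verify that these cross terms coincide after an iid-exchange symmetrization against $h_{\pi}$, and finally invoke the uniqueness argument of \cref{appen:lmunique} to conclude.

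Start with the computation. Using independence of the Bernoulli labels $\Ih(y \succ y')$, $\Is(y)$, $\Is(y')$, linearity of expectation gives
\[
\mathbb{E}_{I}[g_{I}(y,y')] = \ph(y \succ y')\bigl(\ps(y) + \ps(y')\bigr) + \tfrac{1}{2}\ps(y) - \tfrac{3}{2}\ps(y'),
\]
and a short check confirms $g_{I}(y',y) = -g_{I}(y,y')$, aligning with the skew-symmetry of both $h_{\pi}$ and $g(y)-g(y')$. Expanding the square inside \cref{eq_appen:direct_optm} and invoking the tower property in the label randomness decomposes the direct loss as
\[
\mathcal{L}_{2}(\pi_{\theta}) = \mathbb{E}_{y,y' \sim \pi_{\theta}}\!\bigl[(h_{\pi}(y,y') - \tau^{-1}\mathbb{E}_{I}[g_{I}(y,y')])^{2}\bigr] + \tau^{-2}\,\mathbb{E}_{y,y' \sim \pi_{\theta}}\!\bigl[\operatorname{Var}_{I}(g_{I}\mid y,y')\bigr],
\]
and the variance term does not couple to $h_{\pi}$.

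The crux is then to verify the key identity
\[
\mathbb{E}_{y,y' \sim \pi_{\theta}}\!\bigl[h_{\pi}(y,y')\cdot\mathbb{E}_{I}[g_{I}(y,y')]\bigr] = \mathbb{E}_{y,y' \sim \pi_{\theta}}\!\bigl[h_{\pi}(y,y')\,(g(y)-g(y'))\bigr],
\]
which forces the linear-in-$h_{\pi}$ parts of \cref{eq_appen:expect_optm} and \cref{eq_appen:direct_optm} to match. I would prove it by unfolding $E_{s} = \mathbb{E}_{w \sim \pi_{\theta}}[\ps(w)]$ and $\ph(y \succ \pi_{\theta}) = \mathbb{E}_{z \sim \pi_{\theta}}[\ph(y \succ z)]$ inside $g(y)-g(y')$ to rewrite the right-hand side as an expectation over iid draws $y, y', z, w \sim \pi_{\theta}$, then collapse the auxiliary variables via exchangeability and the skew-symmetry of the preference residual $\delta(y,y') := \ph(y \succ y') - \tfrac{1}{2}$. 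The specific choice $A_{1}=E_{s}$, $A_{2}=\tfrac{1}{2}$, $B_{1}=3$, $B_{2}=0$, $B_{3}=\tfrac{1}{2}$ is precisely what makes the coefficients on the $\ps(y)$, $\ps(y')$, and $\delta(y,y')$ monomials line up after the collapse; the more general coefficient relationships are deferred to \cref{appensec:proofconstants}.

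Once the cross-term identity is in hand, the two losses differ only by terms that do not couple to $h_{\pi}$, so their critical points in $\pi_{\theta}$ coincide. Applying \cref{thm:unique} together with the convexity-in-logits argument of \cref{appen:lmunique} then identifies $\pi^{*}$ from \cref{eq_appen:rlhf_optimal} as the unique minimizer of both problems. The main obstacle is the key identity itself: because $\mathbb{E}_{I}[g_{I}(y,y')]$ is not equal to $g(y)-g(y')$ pointwise (the former is $\pi$-independent while the latter depends on $\pi_{\theta}$ through $E_{s}$ and $\ph(\cdot \succ \pi_{\theta})$), the equivalence only emerges after a careful symmetrization against $h_{\pi}$, and tracking exactly which $\ps$- and $\delta$-monomials survive that symmetrization is where the bulk of the technical work resides.
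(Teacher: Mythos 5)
Your proposal is correct and follows essentially the same route as the paper's proof: expand the square, show that the terms coupling to $h_\pi$ (the cross terms) coincide by writing $h_\pi(y,y')$ as a difference of per-sample terms, exploiting exchangeability of $y,y'$ and the skew-symmetry of $g_I$, and using $\mathbb{E}_{y'\sim\pi}[\ph(y'\succ\pi)]=\tfrac12$ to match the $\ps$, $\ph$, and $\ps\ph$ monomial coefficients fixed by $B_1=3$, $B_3=\tfrac12$. Your bias--variance decomposition over the Bernoulli labels is a slightly cleaner bookkeeping of the paper's step of ``taking the expectation over $y'$ and the Bernoulli variables,'' but it is the same argument, and it inherits the same implicit simplifications the paper makes (conditional independence of the helpfulness and safety labels, and treating the sampling measure as decoupled from $\pi_\theta$ when discarding the $h_\pi$-free terms).
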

Here, $I\sim \text{Bernoulli}$ denotes the Bernoulli variables  $\Is(y)$ and $\Is(y')$.

\begin{proof}
The two minimization problems are both over $\pi_\theta$, so we only need to focus on the terms that involve  $\pi_\theta$. Specifically,  the first term and the cross term after expanding the square expression in the two minimization problems. The first term is the same. Here we prove the cross term is also the same.

Let $\pi_y = \log(\pi(y))$, $\pi_y^R = \log(\pi_\text{ref}(y))$, then we can write 
\[
h_{\pi} (y,y') = \pi_{y} - \pi_{y'} + \pi^{R}_{y'} - \pi^{R}_{y}
\]
Let $p_h(y) = p_\text{help}^*(y \succ \pi )$ and $p_s(y) =  p_\text{safe}^*(y) $. 
The cross term of \cref{eq_appen:expect_optm} can be written as 
\begin{equation}\label{eq:proofrhs}
\begin{split}
&\mathbb{E}_{x \sim \rho, y,y' \sim  \pi} \left[  h_\pi(y,y') \left(g\big(\ps(y),\ph(y \succ \pi )\big) - g\big(\ps(y'),\ph(y' \succ \pi )\big) \right) \right] \\
    = & \mathbb{E}_{x \sim \rho, y,y' \sim  \pi} \left[( \pi_{y} - \pi_{y'} + \pi^{R}_{y'} - \pi^{R}_{y})  \left( g\big(p_s(y),p_h(y)) - g(p_s(y'),p_h(y')\big) \right) \right]  \\
    =& \mathbb{E}_{x \sim \rho, y \sim  \pi} \left[( \pi_{y}  - \pi^{R}_{y}) \left(g(p_s(y),p_h(y)) - \mathbb{E}_{y' \sim  \pi}\big[g\big(p_s(y'),p_h(y')\big)\big] \right) \right]\\
    + & \mathbb{E}_{x \sim \rho, y' \sim  \pi} \left[(  - \pi_{y'} + \pi^{R}_{y'} ) \left(\mathbb{E}_{y \sim  \pi}\big[g\big(p_s(y),p_h(y)\big)\big] - g\big(p_s(y'),p_h(y')\big) \right) \right] 
\end{split}
\end{equation}
The third equality is by the independence of $y$  and $y'$. By the  change of notation, the second term of the last line can be written as 
\begin{equation}\label{eq:changeofvariable}
    \begin{split}
        & \mathbb{E}_{x \sim \rho, y' \sim  \pi} \left[(  - \pi_{y'} + \pi^{R}_{y'} )\left( \mathbb{E}_{y \sim  \pi}\big[g(p_s(y),p_h(y))\big] - g(p_s(y'),p_h(y')) \right) \right] \\
    = & \mathbb{E}_{x \sim \rho, y \sim  \pi} \left[(  - \pi_{y} + \pi^{R}_{y} ) \left(\mathbb{E}_{y'\sim  \pi}\big[g(p_s(y'),p_h(y'))\big] - g(p_s(y),p_h(y)) \right) \right]
    \end{split}
\end{equation}
 Then~\cref{eq:proofrhs} can be written as 
\begin{equation}\label{eq:generalrhs}
     (\ref{eq:proofrhs})    =  \mathbb{E}_{x \sim \rho, y \sim  \pi} \left[( \pi_{y}  - \pi^{R}_{y})\cdot 2\left(g(p_s(y),p_h(y)) - \mathbb{E}_{y' \sim  \pi}\big[g(p_s(y'),p_h(y'))\big] \right) \right] 
\end{equation}

Now we plug in $g(p_s(y),p_h(y))= (p_s(y) + E_s)(p_h(y)+\frac{1}{2})$ and use the fact $\mathbb{E}_{y' \sim  \pi}[p_h(y' \succ \pi)] = \frac{1}{2}$.  
\cref{eq:generalrhs} can be expanded as
\begin{equation}\label{eq:finaexpect}
    \begin{split}
   (\ref{eq:proofrhs}) = & \mathbb{E}_{x \sim \rho, y \sim  \pi} \left[( \pi_{y}  - \pi^{R}_{y})\cdot 2\left((p_s(y) + E_s)(p_h(y)+\frac{1}{2}) - \mathbb{E}_{y' \sim  \pi}\big[(p_s(y') + E_s)(p_h(y')+\frac{1}{2})\big] \right) \right] \\
   = & \mathbb{E}_{x \sim \rho, y \sim  \pi} \left[( \pi_{y}  - \pi^{R}_{y})\cdot 2\left((p_s(y) + E_s)(p_h(y)+\frac{1}{2}) -2E_s \right) \right] \\
   = & \mathbb{E}_{x \sim \rho, y \sim  \pi} \left[( \pi_{y}  - \pi^{R}_{y})\cdot\left( 2p_s(y)p_h(y) +2E_sp_h(y)+ p_s(y) - 3E_s\right) 
  \right]
\end{split}
\end{equation}

The cross term of  \cref{eq_appen:direct_optm} can be written as 
\begin{equation}\label{eq:prooflhs}
\begin{split}
    & \mathbb{E}_{x \sim \rho, y,y' \sim  \pi} \mathbb{E}_{I\sim \text{Bernoulli} } \left[ h_\pi(y,y') g_I(y,y')\right] \\ 
= &\mathbb{E}_{x \sim \rho, y,y' \sim  \pi} \mathbb{E}_{I\sim \text{Bernoulli} } \left[( \pi_{y} - \pi_{y'} + \pi^{R}_{y'} - \pi^{R}_{y})g_I(y,y')\right] \\
\end{split}
\end{equation}
Now we plug in $g_I =  \Ih(y \succ y' )\Is(y) + \Ih (y \succ y')\Is(y') + \frac{1}{2}\Is(y) - \frac{3}{2}\Is(y') $, 
\[
\begin{split}
    (\ref{eq:prooflhs}) = & 
        \mathbb{E}_{x \sim \rho, y,y' \sim  \pi} \mathbb{E}_{I\sim \text{Bernoulli} } \Big[( \pi_{y} - \pi_{y'} + \pi^{R}_{y'} - \pi^{R}_{y})  \big( \Ih(y \succ y' )\Is(y) \\
        & \hspace{5cm}  + \Ih (y \succ y')\Is(y')+ \frac{1}{2}\Is(y) - \frac{3}{2}\Is(y') \big)\Big]\\
    = &  \mathbb{E}_{x \sim \rho, y,y' \sim  \pi} \mathbb{E}_{I\sim \text{Bernoulli} } \Big[( \pi_{y}  - \pi^{R}_{y})\big( \Ih(y \succ y' )\Is(y) \\
    &  \hspace{5cm}+ \Ih (y \succ y')\Is(y') + \frac{1}{2}\Is(y) - \frac{3}{2}\Is(y') \big)\Big] \\
     &  +  \mathbb{E}_{x \sim \rho, y,y' \sim  \pi} \mathbb{E}_{I\sim \text{Bernoulli} } \Big[( - \pi_{y'} + \pi^{R}_{y'} )\big( \Ih(y \succ y' )\Is(y)  \\
     & \hspace{5cm}+ \Ih (y \succ y')\Is(y') + \frac{1}{2}\Is(y) - \frac{3}{2}\Is(y') \big)\Big] \\
\end{split}
\]
With the similar change of notation as~\cref{eq:changeofvariable}, as well as the fact that $1-\Ih(y \succ y' ) = \Ih(y' \succ y )$, the last line can be written as 
\[
\begin{split}
    &  \mathbb{E}_{x \sim \rho, y,y' \sim  \pi} \mathbb{E}_{I\sim \text{Bernoulli} } \Big[( - \pi_{y'} + \pi^{R}_{y'} )\big( \Ih(y \succ y' )\Is(y) \\
    & \hspace{5cm} + \Ih (y \succ y')\Is(y') + \frac{1}{2}\Is(y) - \frac{3}{2}\Is(y') \big)\Big] \\ 
    = & \mathbb{E}_{x \sim \rho, y,y' \sim  \pi} \mathbb{E}_{I\sim \text{Bernoulli} } \Big[( - \pi_{y} + \pi^{R}_{y} )\big( \Ih(y' \succ y )\Is(y') \\
    & \hspace{5cm} + \Ih (y' \succ y)\Is(y) + \frac{1}{2}\Is(y') - \frac{3}{2}\Is(y) \big)\Big] \\
    = & \mathbb{E}_{x \sim \rho, y,y' \sim  \pi} \mathbb{E}_{I\sim \text{Bernoulli} } \Big[( - \pi_{y} + \pi^{R}_{y} )\big( (1-\Ih(y \succ y' ))\Is(y') \\
    & \hspace{5cm} +  (1-\Ih(y \succ y' ))\Is(y) + \frac{1}{2}\Is(y') - \frac{3}{2}\Is(y)  \big)\Big]
\end{split}
\]
Then we further expand~\cref{eq:prooflhs} as 
\begin{equation}\label{eq_appen:expand_direct}
    \begin{split}
    (\ref{eq:prooflhs}) = & \mathbb{E}_{x \sim \rho, y,y' \sim  \pi} \mathbb{E}_{I\sim \text{Bernoulli} } \Big[( \pi_{y}  - \pi^{R}_{y})\big( \Ih(y \succ y' )\Is(y) \\
    & \hspace{5cm}  + \Ih (y \succ y')\Is(y') + \frac{1}{2}\Is(y) - \frac{3}{2}\Is(y') \big)\Big] \\
    &+   \mathbb{E}_{x \sim \rho, y,y' \sim  \pi} \mathbb{E}_{I\sim \text{Bernoulli} } \Big[( - \pi_{y} + \pi^{R}_{y} )\big( (1-\Ih(y \succ y' ))\Is(y') \\
    &  \hspace{5cm}  +  (1-\Ih(y \succ y' ))\Is(y) + \frac{1}{2}\Is(y') - \frac{3}{2}\Is(y) \big)\Big]\\
    =  & \mathbb{E}_{x \sim \rho, y,y' \sim  \pi} \mathbb{E}_{I\sim \text{Bernoulli} } \Big[( \pi_{y}  - \pi^{R}_{y}) \big(2\Ih(y \succ y' )\Is(y) \\
    & \hspace{5cm} + 2  \Ih (y \succ y')\Is(y') + \Is(y) -3\Is(y')
    \big)\Big] 
\end{split}
\end{equation}
Taking the expectation over $y'$ and the Bernoulli variables, we have 
\begin{equation}\label{eq_appen:final_direct}
    (\ref{eq:prooflhs}) = \mathbb{E}_{x \sim \rho, y \sim  \pi} \Big[( \pi_{y}  - \pi^{R}_{y}) \big( 2p_h(y)p_s(y) + 2E_sp_h(y) + p_s(y) - 3E_s\big)\Big] 
\end{equation}
This equation is the same as~\cref{eq:finaexpect}, which ends the proof that  \cref{eq_appen:expect_optm} and \cref{eq_appen:direct_optm} are equivalent!
\end{proof}

As discussed in  \cref{appensec:proofunique}, we can freely change the order of $y$ and $y'$ in \cref{eq_appen:expect_optm} and \cref{eq_appen:direct_optm}. Thus, the proof of \cref{thm_appen:equivalence} also applies to \cref{thm:equivalence}.

\subsection{Relation of the Constants}
\label{appensec:proofconstants}

In this section, we derive a more general form of \cref{thm_appen:equivalence}, where,  with specific relations between the constants in $g$ and $g_I$, the optimization problem in \cref{eq_appen:expect_optm} is equivalent to the optimization problem in \cref{eq_appen:direct_optm}.

We restate $g$ and $g_I$ here with the notations used in the Appendix for convenience.
\[
    g = (p_s(y )  + A_1)(p_h(y)  + A_2),
\]
and 
\[
\begin{split}
    g_I(y,y') = &(B_1B_3-B_3) \Ih(y \succ y')\Is(y) + (B_1B_3-B_3) \Ih(y \succ y')\Is(y') \\
    &    -B_1B_3\Is(y') + B_3\Is(y)  + 2B_2B_3\Ih(y \succ y') -B_2B_3
\end{split}
\]
As discussed in the proof of \cref{thm_appen:equivalence}, we only need to find the  relationship such that the cross terms of the two optimization problems are identical. We first expand the cross term of the optimization problem in \cref{eq_appen:expect_optm}. As in \cref{eq:generalrhs}, it can be written as
\begin{equation}\label{eq_appen:constant_expect}
    (\ref{eq:proofrhs})    =  \mathbb{E}_{x \sim \rho, y \sim  \pi} \left[( \pi_{y}  - \pi^{R}_{y})\cdot 2\left(g(p_s(y),p_h(y)) - \mathbb{E}_{y' \sim  \pi}\big[g(p_s(y'),p_h(y'))\big] \right) \right] 
\end{equation}

Using the same strategy of obtaining \cref{eq_appen:expand_direct},
 we have 
\begin{multline}\label{eq_appen:constant_direct}
    (\ref{eq:prooflhs}) = \mathbb{E}_{x \sim \rho, y \sim  \pi} \Big[( \pi_{y}  - \pi^{R}_{y}) \big( 2B_3(B_1-1) p_s(y)p_h(y) \\ + 2B_3((B_1-1)E_s + 2B_2)p_h(y)   +  2B_3p_s(y)  - 2B_1B_3E_s  -2 B_2B_3\big)\Big] 
\end{multline}
Aligning the coefficients of each term in \cref{eq_appen:constant_expect} and \cref{eq_appen:constant_direct}, we derive the following set of equations: 
\begin{equation}\label{eq_appen: constant_equality}
   \begin{split}
     B_3(B_1 - 1)&= 1,\\
     E_s + 2B_3B_3 &= A_1, \\
     B_3 &= A_2 .
\end{split} 
\end{equation}
Solving these equations gives us the specific forms of  $g$ and $g_I$. Here $B_2$ is a shifting value that we define to align with our intuition. $B_3$ is a scaling factor that is related to the penalty $\tau$.

\subsection{Discussion of the Property of $g_I$}
\label{appensec:discussproperty}

In this section, we discuss the two beneficial properties of \( g_I \) that we proposed in \cref{sec:equivalence}.

\textbf{Skew-Symmetric Property.}
First, we examine the skew-symmetric property of \( g_I \).  When combined with the skew-symmetric property of \( h \), this implies:
\[
\left( h_\pi(y,y') - \tau^{-1} g_I(y,y') \right)^2 = \left( h_\pi(y',y) - \tau^{-1} g_I(y',y) \right)^2.
\]
This means that for the same data point, regardless of the order of \( y \) and \( y' \), we are always driving \( h_\pi(y,y') \) to the same value. In contrast, in IPO~\citep{ipo}, different orders  will push \( h_\pi(y,y') \) to different values, i.e., they form two different optimization problems:
\[
(h_\pi(y,y') - \tau^{-1} g_I(y,y'))^2 \quad \text{and} \quad (h_\pi(y',y))^2.
\]
Their final optimization problem, \((h_\pi(y,y') - \frac{1}{2} \tau^{-1} g_I(y,y'))^2\), tries to find a middle point of \( h \) that optimizes both. However, this point is neither the optimal solution of the first problem nor the second problem.

\textbf{Shifting Property.}
Second, we discuss the shifting properties of \( g_I \). Since \cref{thm:equivalence} holds based on the equality of \cref{eq_appen:final_direct} and \cref{eq:finaexpect}, and all the operations to derive these two equations are valid under linear transformations of \(\ps, \ph\) and \(\Is, \Ih\), respectively. It implies that \cref{thm:equivalence} also holds under the same linear transformations of \(\ps, \ph\) and \(\Is, \Ih\).

This property is useful when we want to manually design the values of \( g_I \), as shown in \cref{fig:pairwiserank}.

\section{Experiment}

\subsection{Details of Harmlessness Benchmark}\label{appen:benchmark}

The following are the details of the datasets used in the benchmark:
\begin{itemize}
    \item \textbf{CrowS-Pairs}: A challenge set for evaluating the tendency of language models (LMs) to generate biased outputs. We evaluate the English subset and \href{https://github.com/EleutherAI/lm-evaluation-harness/blob/main/lm_eval/tasks/crows_pairs/README.md}{implementation by LM Evaluation Harness} and report the Percentage of Stereotype metric. 
    \item \textbf{Bias Benchmark for QA (BBQ)}: Measures social biases in the output of multiple-choice question-answering models.  We use the Lite version and \href{https://github.com/google/BIG-bench/tree/main/bigbench/benchmark_tasks/bbq_lite}{implementation by BigBench} and report the Accuracy metric.
    \item \textbf{WinoGrande}: A collection of 44K problems inspired by the Winograd Schema Challenge, adjusted to improve scale and robustness against dataset-specific biases. We use the \href{https://github.com/EleutherAI/lm-evaluation-harness/tree/main/lm_eval/tasks/winogrande}{implementation by LM Evaluation Harness} and report the Accuracy metric.
    \item \textbf{ETHICS}: A benchmark spanning concepts in justice, well-being, duties, virtues, and commonsense morality.  We use the \href{https://github.com/EleutherAI/lm-evaluation-harness/tree/main/lm_eval/tasks/winogrande}{implementation by LM Evaluation Harness} and report the Accuracy for all sub-tasks.
    \item  \textbf{BigBench Moral Permissibility}: Tests whether ultra-large language models can read a short story where a moral scenario is presented and answer the question, "Is it morally permissible to do X?" in a manner similar to humans. We use the \href{https://github.com/google/BIG-bench/blob/main/bigbench/benchmark_tasks/moral_permissibility/README.md}{implementation by BigBench} and report the Accuracy metric.
    \item \textbf{BigBench Simple Ethics Questions}: Assesses a language model's responses to hypothetical, consequential, political, and social questions. We use the \href{https://github.com/google/BIG-bench/tree/main/bigbench/benchmark_tasks/simple_ethical_questions}{implementation by BigBench} and report the Accuracy metric.
    \item \textbf{ToxicGen}: A benchmark for evaluating the ability of language models to classify input text as either hateful or not hateful.  We use the \href{https://github.com/EleutherAI/lm-evaluation-harness/tree/main/lm_eval/tasks/toxigen}{implementation by LM Evaluation Harness} and report the Accuracy metric.
    \item \textbf{BigBench HHH Alignment}: Evaluates language models on alignment, pragmatically broken down into the categories of helpfulness, honesty/accuracy, harmlessness, and other aspects.  We use the \href{https://github.com/google/BIG-bench/tree/main/bigbench/benchmark_tasks/hhh_alignment}{implementation by BigBench} and report the Accuracy metric.
    \item \textbf{AdvBench} contains harmful prompts. We use the prompts provided \href{https://github.com/llm-attacks/llm-attacks/blob/main/data/advbench/harmful_behaviors.csv}{here} and generation implementation by  LM Evaluation Harness. We report the percentage of harmless responses measured by \href{https://huggingface.co/cais/HarmBench-Llama-2-13b-cls}{HarmBench-Llama-2-13b-cls} model.  
    \item \textbf{RealToxicityPrompts}: A benchmark for evaluating the ability of language models to continue a prompt in a non-toxic way.  We use the \href{https://github.com/EleutherAI/lm-evaluation-harness/tree/main/lm_eval/tasks/toxigen}{implementation by LM Evaluation Harness} report the percentage of harmless responses measured by \href{https://huggingface.co/cais/HarmBench-Llama-2-13b-cls}{HarmBench-Llama-2-13b-cls} model. 
    \item \textbf{ALERT}: A benchmark to assess the safety of LLMs through red teaming methodologies.   We use the prompts provided \href{https://huggingface.co/datasets/Babelscape/ALERT}{here} and generation implementation by  LM Evaluation Harness. We report the percentage of harmless responses measured by \href{https://huggingface.co/cais/HarmBench-Llama-2-13b-cls}{HarmBench-Llama-2-13b-cls} model. 
    \item \textbf{ALERT Adversarial}: A benchmark to assess the safety of LLMs through red teaming methodologies with adversarial prompts.  We use the prompts provided \href{https://huggingface.co/datasets/Babelscape/ALERT/viewer/alert_adversarial}{here} and generation implementation by  LM Evaluation Harness. We report the percentage of harmless responses measured by \href{https://huggingface.co/cais/HarmBench-Llama-2-13b-cls}{HarmBench-Llama-2-13b-cls} model.  
    \item \textbf{AlpacaEval} Based on the AlpacaFarm evaluation set, which tests the ability of models to follow general user instructions. We employ the \href{https://github.com/tatsu-lab/alpaca_eval}{official implementation} report the LC Win Rate. 
\end{itemize}

\subsection{Details of Baselines}\label{appen:baselines}
The following are the details of the methods that align LLMs for multiple objectives. 
\begin{itemize}
    \item \textbf{Llama2}~\citep{llama2} trains the safety reward $r_\text{safe}$ and the helpfulness reward $r_\text{help}$ separately, and   defines the global reward $g$ as a combination of these rewards, \ie,
    \begin{align*}
        & \Tilde{g} (y|x) = \begin{cases}
            r_\text{safe}(y|x) \text{ if }  \text{IS\_SAFETY}(x), \text{ or } r_\text{safe}(y|x)< 0.15,\\
            r_\text{help}(y|x) \text{ otherwise},
            \end{cases}\\
        & g(y|x) = \text{WHITEN}(\text{LOGIT}(\Tilde{g} (y|x))).
    \end{align*}
    Here IS\_SAFETY$(x)$  indicates whether prompts are tagged as unsafe in their dataset, and the 0.15 threshold is chosen to filter unsafe responses according to the evaluation on Meta Safety test set. Whitening the final linear scores is to increase stability. The global reward is used in the RLHF objective in \cref{eq:multiobjective}.
    \item \textbf{Beaver}~\citep{dai2024safe} trains the safety reward $r_\text{safe}$ and the helpfulness reward $r_\text{help}$ separately, and   defines the final RLHF objective as the dual optimization problem of the conditional RLHF, obtained by Lagrangian dual transformation, \ie, 
    \[
    \min_\theta \max_{\lambda \geq 0} \mathbb{E}_{x\sim D, y\sim\pi_\theta} \left[ -r_\text{help}(y|x) + \lambda \left( r_\text{safe}(y|x) +d\right) \right],
    \]
    where $\lambda \geq 0$ is the Lagrange multiplier.  In practice, the model parameter $\theta$ and the Lagrange multiplier $\lambda$ are updated iteratively.
    \item \textbf{RBR}~\citep{murule} requires separate reward models, $r_{\phi_1}, \dots, r_{\phi_k} $, for each objective,  and propose to learn the weight for each objective, \ie,
    \item \[
    g(y|x) = \sum_{i=1}^k \lambda_i r_i(y|x),
    \]
    where $\lambda_i$ are learnable parameters. The global reward is used in the RLHF objective in \cref{eq:multiobjective}.
    \item \textbf{SteerLM}~\citep{dong2023steerlm} trains models to generate response according to a specific reward vector $r = (r_1, r_2, r_3,\dots,r_k)$. They first train a model to predict the score for each objective  in a dataset.  Supervised fine-tuning is performed to maximize the probability of generating responses conditioned on the reward vector and the prompt, \ie,
    \[
    \max_\theta \mathbb{E}_{(x,y,r)\sim D}  \log p_\theta(y|x,r).
    \]
    \item \textbf{MORL}~\citep{rame2023rewardedsoups} trains reward models for each objective separately, and defines the global reward $g$ as a combination of rewards, \ie, 
    \[
    g(y|x) = \sum_{i=1}^k \lambda_i r_i(y|x),
    \]
    The global reward is used in the RLHF objective in \cref{eq:multiobjective}.
    \item \textbf{ArmoRM}~\citep{wang2024arithmetic} applies the same training strategy as MORL, but uses a single publicly available reward model, ArmoRM-Llama3-8B-v0.1~\citep{ArmoRM}, to provide the reward scores for all objectives. 
    \item \textbf{MODPO}~\citep{zhou2023beyond} trains margin reward models $r_i, i=1,\dots,k$ for each objective separately, and performs supervised fine-tuning with the objective, 
    \begin{multline*}
    \max \mathbb{E}_{(x,y^w,y^l)\sim D} \\ \log \sigma\left( \frac{1}{\omega_k} \left( \tau \log \frac{\pi_\theta(y^w|x)}{\pi_{\text{ref}}(y^w|x)} - \tau \log \frac{\pi_\theta(y^l|x)}{\pi_{\text{ref}}(y^l|x)}  - {\omega}_{-k}^T (r_{-k}(x,y^w) - r_{-k}(x,y^l) )\right) \right),
 \end{multline*}
    where $\omega_k$ is the weight for the objective $k$, $\omega_{-k}$ is the weight vector for all other objectives, and $r_{-k}$ is the reward vector for all other objectives than $k$. This fine-tuning is performed  for each objective.
    \item \textbf{MinMaxRLHF}~\citep{chakraborty2024maxminrlhf} addresses the scenario where different annotators $h$ may have preferences for different objectives. The algorithm uses the EM algorithm to learn the distribution of rewards for multiple objectives.   In the E step, they find the certain objective $i$ that each human annotator $h$ relates to, \ie,
    \[
    \mathcal{I}_h =  \argmax_{i} \Pi_{x,y,y',h}\frac{\exp(r_{\phi_i} (x,y))}{\exp(r_{\phi_i} (x,y)) + \exp(r_{\phi_i} (x,y'))},
    \]
    where $r_{\phi_i}$ is the reward model for the objective $i$. In the M step, each reward model $i$ is updated by the reward learning objective in \cref{eq:rewardlearning} with the data assigned to objective  $i$, \ie, the dataset is $D_i = \{(x,y,y',h),    \mathcal{I}_h=i \}$. In the RLHF stage, they maximize the minimum reward of all reward scores, \ie, 
    \[
    \mathbb{E}_{x\sim D, y\sim\pi_\theta} \big[ \min_i r_{\phi_i} (x,y) - \tau \text{KL} \left[ \pi_\theta (y | x) || \pi_{\text{ref}} (y | x) \right] \big].
    \]
\end{itemize}
Among these methods, MODPO is highly inefficient since it requires separate RLHF for each objective. Other methods typically use a linear combination of reward scores for multiple objectives or one reward as a threshold for others.  For the combination of thresholding, the global function can be approximated by the multiplication of rewards for each objective when the reward scores are on the same scale.  Maximizing the multiplication of rewards has the same effect as maximizing the minimum reward. Therefore, we hypothesize that the global reward should be a bilinear combination of the reward scores as in \cref{eq:desiredreward}. As such, in the experiment section, we  select MORL as a representative for this line of approach. 

\subsection{Additional Ablation Studies} \label{appen-sec:ablation}
We include additional ablation studies of other hyper-parameters in our algorithms. Full results are in \cref{tab:fullresults}.

\paragraph{LoRA finetuning. }
We follow the setting of \cref{sec:experiment} and conduct additional experiments to compare our method and the best performed baseline, IPO, with LoRA fine-tuning. During the training, we apply the same training hyper-parameters to both algorithms, like learning rate, training epochs, beta, and so on. The results are in \cref{appen-tab:lorafinetuning}. Results show that BFPO consistently outperformed the baselines when training with LoRA. However, we observed that LoRA training required additional hyperparameter tuning, which posed challenges due to the limited time. Consequently, both methods achieved lower overall performance and worse balance compared to selective fine-tuning.

\begin{table}[h]
    \centering\small
    \begin{tabular}{lcccccccc} \toprule
 & \multicolumn{4}{c}{LoRA Fine-tuning}& \multicolumn{4}{c}{Selective Fine-tuning}\\
       \cmidrule(lr){2-5} \cmidrule(lr){6-9}
        & Helpfulness& \multicolumn{3}{c}{Harmlessness} & Helpfulness& \multicolumn{3}{c}{Harmlessness}\\\cmidrule(lr){2-2} \cmidrule(lr){3-5}\cmidrule(lr){6-6} \cmidrule(lr){7-9}
        & Alpaca& Disc.& Gen.& Savg. & Alpaca& Disc.&Gen.&Savg. \\
        \midrule
        IPO & 6.14 & 58.05 & 93.97 & 76.1  & 13.15& 58.41&  89.76&  74.09\\
        BFPO & 7.77 & 64.36 & 94.73 & 79.54  & 3.33& 59.09& 95.24&77.16\\
        \bottomrule
    \end{tabular}
    \caption{Comparison of Helpfulness and Harmlessness Metrics with LoRA finetuning}
    \label{appen-tab:lorafinetuning}
\end{table}

\begin{table}[t]\centering
  \begin{minipage}[t]{0.49\textwidth}\small\centering
      \centering
    \caption{Ablation Study of $\alpha$ in \cref{eq:wlobjective}}
    \label{appen-tab:alpha}
    \begin{tabular}{cccccc}
        \toprule
       \multirow{2}{*}{$\alpha$} & Helpfulness & \multicolumn{3}{c}{Harmlessness} \\ \cmidrule(lr){2-2} \cmidrule(lr){3-5} 
        & Alpaca  & Disc. & Gen. & Savg. \\ 
        \midrule
        0.1 & 13.61 & 59.81 & 87.39 & 73.60 \\
        0.3 & 14.06 & 60.31 & 91.73 & 76.02 \\
        0.5 & 13.33 & 59.09 & 95.24 & 77.16 \\
        0.7 & 9.01  & 57.34 & 96.28 & 76.81 \\
        0.9 & 7.21  & 56.44 & 96.66 & 76.55 \\
        \bottomrule
    \end{tabular}
  \end{minipage}\hfill
  \begin{minipage}[t]{0.49\textwidth}\small\centering
      \caption{Ablation Study of $\tau$ in \cref{eq:wlobjective}}
    \label{appen-tab:tau}
    \begin{tabular}{ccccc}
         \toprule
       \multirow{2}{*}{$\tau$} & Helpfulness & \multicolumn{3}{c}{Harmlessness} \\ \cmidrule(lr){2-2} \cmidrule(lr){3-5} 
        & Alpaca  & Disc. & Gen. & Savg. \\ 
        \midrule
        0.01 & 13.33 & 59.09 & 95.24 & 77.16 \\
        0.1  & 6.4   & 55.44 & 81.45 & 68.44 \\
        0.5  & 6.53  & 54.01 & 78.14 & 66.07 \\
        1.0  & 6.74  & 54.10 & 77.52 & 65.81 \\
        \bottomrule
    \end{tabular}
  \end{minipage}
\end{table}

\paragraph{The hyperparameter $\alpha$. } The hyperparameter 
 controls the label values (represent the difference of the preference of a pair of response) of the four cases in \cref{fig:pairwiserank}. To ensure the desired behavior, that helpful-safe responses are preferred over helpless-safe ones (case 2 in \cref{fig:pairwiserank} yields a positive value) and that helpful-unsafe responses are not preferred over helpless-unsafe ones (Case 3 in \cref{fig:pairwiserank} yields a negative value), we constrain $\alpha \in (0,1)$.
When $\alpha = 0.5$, the label values for the four cases are $1, 0.5, -0.5, -1$, where the absolute label values are symmetric for positive and negative pairs. As $\alpha$ increases, the absolute label values in case 1,2 in \cref{fig:pairwiserank} decrease, and the absolute label values in case 3,4 in \cref{fig:pairwiserank} increase. In other words, positive pairs will have smaller differences and negative pairs will have larger differences.

In the ablation study, we follow the experiment of \cref{sec:experiment} with values of $0.1, 0.3,0 0.5, 0.7, 0.9$ to systematically explore its effects. The results in \cref{appen-tab:alpha} show that higher $\alpha$ values reduce distinctions between positive pairs, particularly helpful-safe vs. non-helpful-safe, leading to a lower helpfulness score. However, it increases distinctions between negative pairs, especially helpful-unsafe vs. non-helpful-safe, resulting in improved harmlessness, particularly in generative tasks.

\paragraph{The hyperparameter $\tau$.}
The hyperparameter $\tau$ is the coefficient of the KL term in \cref{eq:hs_multiobjectiverl}, which prevents the policy from deviating from the reference policy. In practice, it is important to note that 
 is more related to the training and convergence (as shown in \cref{fig:illustrative}) rather than being a core component to balance the safety and helpfulness.

In our experiments, we follow the settings from \citet{zephyr}, where  $\tau = 0.01$
 is used. This value is applied consistently across all baselines to ensure a fair comparison.
For the ablation study, we adopt values inspired by \citet{gpo}, specifically $\tau = 0.01, 0.1, 0.5, 1.0$. The results  in \cref{appen-tab:tau} indicate that performance can vary significantly with different $\tau$
 values. With different $\tau$, other training hyperparameters, like learning rate, training iterations also need to be carefully chosen.

\paragraph{Ablation on $B_1, B_2, B_3$. }  both  $B_1,  B_3$ must be positive. For this ablation study, we explore the following values $B_3 = 2,1,\frac{1}{2}, \frac{1}{4}$. Given the constraint $B_3(B_1-1) = 1$ as in \cref{eq_appen: constant_equality}, the corresponding values of  $B_1$ are determined for each $B_3$. Additionally, $B_2$ is adjusted to balance the cases described in \cref{fig:pairwiserank} (Case 2 and Case 3). The experiment results are in \cref{tab-appen:b1}.

When $B_3$ is smaller, the label differences for cases 1,2 and 3,4 in \cref{fig:pairwiserank} become less pronounced. For example, in Cases 1 and 2, the pairs (helpful-safe, non-helpful-unsafe) and (helpful-safe, non-helpful-safe) have smaller differences in their label values. This means there is less distinction in whether the non-helpful response is safe or not. As a result, the model shows slightly worse performance in helpfulness but performs better in safety. When $B_3$  is larger, the label differences for the aforementioned two cases become more distinct, and the label value for (helpful-safe, non-helpful-unsafe) becomes significantly higher. This leads the model to prioritize safety more strongly, which results in improved safety performance but a sacrifice in helpfulness.

To conclude, larger $B_3$ values emphasize safety at the expense of helpfulness, while proper values allow for more balanced performance across both objectives.
\begin{table}[h]
    \centering\small
    \caption{Ablation study for $B_1, B_2, B_3$ in \cref{eq:hpgiwl} }
    \label{tab-appen:b1}
    \begin{tabular}{cccccccc}
        \toprule
       $ B_3$ &$ B_1$ & $B_2$ & Values for four cases in \cref{fig:pairwiserank}  & Helpfulness & \multicolumn{3}{c}{Harmlessness}\\
 & & & & Alpaca& Disc.& Gen.&Savg.\\
        \midrule
        2 & 1.5 & -0.25 & 2.5,0.5,-0.5,-0.25 & 9.00 & 58.67 & 95.47 & 77.07 \\
        1 & 2 & -0.5 & 1.5,0.5,-0.5,-1.5 & 11.36 & 60.28 & 95.12 & 77.70 \\
        0.5 & 3 & -1 & 1,0.5,-0.5,-1 & 13.33 & 59.09 & 95.24 & 77.16 \\
        0.25 & 5 & -2 & 0.75, 0.5, -0.5, -0.75 & 13.15 & 59.63 & 94.25 & 76.94 \\
        \bottomrule
    \end{tabular}
\end{table}

\subsection{Full Experiment Results}\label{appen:fullresults}
Here are the details of each open-sourced models:
\begin{itemize}
    \item Zephyr: \url{https://huggingface.co/HuggingFaceH4/zephyr-7b-beta}
    \item Juanako: \url{https://huggingface.co/fblgit/juanako-7b-UNA}
    \item OpenChat: \url{https://huggingface.co/openchat/openchat_3.5}
    \item Mistral: \url{https://huggingface.co/mistralai/Mistral-7B-Instruct-v0.2}
    \item Beaver: \url{https://huggingface.co/PKU-Alignment/beaver-7b-v3.0}
    \item Llama2: \url{https://huggingface.co/meta-llama/Llama-2-7b-chat-hf}
    \item Llama3: \url{https://huggingface.co/meta-llama/Meta-Llama-3-8B-Instruct}
\end{itemize}

\cref{tab:fullresults} shows the full results of all experiments throughout the paper. Here are the details of the data used in our model and the baselines.

We use 4 Nvidia A100 GPUs for each experiment, and the training time for each experiment is around 6 hours for SFT and 6 hours for {\ours}.
For the experiments with red teaming data, we use 1.5K data collected as described in \cref{sec:expredteam} and only performs the {\ours} stage. The training time for this experiment is around 10 minutes with 4 Nvidia A100 GPUs.

\begin{landscape}
\begin{table}[]
\centering\tiny
\caption{Full results of all experiments}
\label{tab:fullresults}
\begin{tabular}{l*{20}{p{0.5cm}}}
\toprule
Model / Method & Alpaca Eval & Crows Pairs & BBQ & Winogrande & Ethics CM & Ethics  Justice & Ethics Deontology & Ethics Utilitarianism & Ethics Virtue & Moral Permissibility & Simple Ethical Questions & Toxigen & HHH Alignment & Real-Toxicity-Prompts & Adv-Bench & ALERT & ALERT Adversarial & Discri-minative Average & Genera-tive Average & Safety Average \\ \midrule
Zephyr-7b-beta & 10.99 & 62.02 & 39.00 & 72.38 & 68.37 & 69.71 & 56.98 & 73.59 & 91.30 & 51.00 & 33.00 & 45.21 & 46.00 & 85.82 & 20.19 & 79.08 & 66.68 & 59.05 & 62.94 & 60.99 \\
Juanako-7b-UNA & 2.88 & 63.74 & 84.00 & 77.43 & 75.96 & 76.41 & 64.10 & 73.79 & 89.13 & 49.00 & 82.00 & 60.96 & 49.00 & 85.90 & 27.50 & 80.70 & 72.79 & 70.46 & 66.72 & 68.59 \\
OpenChatv3.5 & 11.08 & 66.67 & 61.00 & 72.69 & 68.88 & 77.74 & 63.96 & 73.48 & 88.70 & 50.00 & 91.00 & 42.34 & 46.00 & 87.82 & 48.27 & 75.36 & 73.09 & 66.87 & 71.13 & 69.00 \\
Mistral-7B-Instruct-v0.2 & 14.72 & 64.88 & 61.84 & 73.80 & 73.46 & 71.93 & 60.26 & 66.78 & 90.87 & 47.95 & 53.91 & 55.11 & 47.06 & 83.74 & 65.38 & 90.44 & 77.71 & 63.99 & 79.32 & 71.65 \\
Beaver3 & 1.00 & 56.23 & 31.37 & 65.35 & 59.43 & 64.61 & 61.48 & 56.01 & 61.61 & 47.66 & 45.22 & 36.17 & 43.44 & 85.07 & 93.20 & 91.83 & 94.80 & 52.38 & 91.23 & 71.80 \\
Llama2 & 7.60 & 63.98 & 32.99 & 66.46 & 56.14 & 50.00 & 50.00 & 57.97 & 72.00 & 47.37 & 24.35 & 51.00 & 44.34 & 87.91 & 100.00 & 98.62 & 98.32 & 51.38 & 96.21 & 73.80 \\
Llama3 & 22.90 & 63.45 & 60.68 & 71.82 & 58.64 & 70.38 & 64.49 & 62.92 & 81.49 & 48.54 & 54.78 & 45.74 & 45.25 & 89.49 & 99.42 & 95.18 & 95.08 & 60.68 & 94.79 & 77.74 \\\midrule
Mistral + DPO + Helpful Data & 10.99 & 62.02 & 39.00 & 72.38 & 68.37 & 69.71 & 56.98 & 73.59 & 91.30 & 51.00 & 33.00 & 45.21 & 46.00 & 85.82 & 20.19 & 79.08 & 66.68 & 59.05 & 62.94 & 60.99 \\
Mistral + DPO + Safety Data & 4.34 & 65.65 & 39.50 & 74.03 & 64.22 & 55.29 & 50.86 & 60.00 & 89.73 & 46.78 & 38.26 & 47.45 & 45.25 & 87.74 & 100.00 & 99.91 & 99.98 & 56.42 & 96.91 & 76.66 \\
Mistral + DPO + Naive mix Data & 14.71 & 65.59 & 43.68 & 74.27 & 56.47 & 71.01 & 58.20 & 57.15 & 86.71 & 51.17 & 39.13 & 51.06 & 45.70 & 82.49 & 4.23 & 38.64 & 33.46 & 58.35 & 39.71 & 49.03 \\
Mistral + IPO + Naive mix Data & 13.16 & 66.25 & 42.44 & 74.66 & 62.03 & 66.35 & 54.67 & 67.03 & 89.17 & 47.37 & 37.39 & 48.72 & 44.80 & 86.41 & 88.65 & 96.00 & 88.00 & 58.41 & 89.76 & 74.09 \\
Mistral + MORL + Naive mix Data & 10.83 & 61.66 & 39.43 & 71.51 & 68.01 & 67.71 & 55.70 & 72.57 & 91.08 & 50.58 & 33.91 & 44.15 & 46.15 & 87.07 & 21.15 & 82.13 & 69.16 & 58.54 & 64.88 & 61.71 \\
Mistral + BFPO + mix Data & 13.33 & 65.77 & 45.25 & 74.98 & 65.25 & 59.13 & 51.97 & 70.36 & 90.41 & 47.37 & 39.13 & 54.15 & 45.25 & 87.32 & 98.65 & 98.56 & 96.42 & 59.09 & 95.24 & 77.16 \\\midrule
Red Teaming + DPO & 13.07 & 61.84 & 38.95 & 72.45 & 67.77 & 69.12 & 57.48 & 73.63 & 91.42 & 50.88 & 36.52 & 45.11 & 46.15 & 87.99 & 44.00 & 89.22 & 76.33 & 59.28 & 74.39 & 66.83 \\
Red Teaming + IPO & 13.74 & 61.96 & 38.89 & 72.77 & 68.03 & 69.30 & 57.62 & 73.54 & 91.48 & 50.58 & 36.52 & 45.43 & 45.70 & 87.91 & 41.35 & 87.48 & 74.55 & 59.32 & 72.82 & 66.07 \\
Red Teaming + MORL & 12.56 & 61.66 & 38.47 & 71.98 & 66.07 & 69.16 & 56.98 & 73.02 & 91.36 & 51.17 & 33.04 & 44.26 & 45.70 & 87.66 & 21.15 & 82.13 & 69.16 & 58.57 & 65.02 & 61.80 \\
Red Teaming + BFPO & 14.41 & 61.72 & 39.44 & 72.45 & 67.28 & 68.01 & 57.20 & 73.46 & 91.54 & 49.12 & 37.39 & 45.32 & 45.25 & 87.82 & 86.54 & 86.34 & 94.47 & 59.02 & 88.79 & 73.90 \\\midrule
BFPO w\textbackslash{}o shift & 12.76 & 65.95 & 44.44 & 74.98 & 62.50 & 59.87 & 52.28 & 66.64 & 88.78 & 47.66 & 50.43 & 49.89 & 45.70 & 84.32 & 96.15 & 96.90 & 94.12 & 59.09 & 92.87 & 75.98 \\
BFPO w\textbackslash{}o buffer & 15.59 & 65.65 & 44.43 & 74.43 & 61.13 & 69.05 & 56.56 & 67.08 & 89.79 & 47.37 & 46.96 & 53.94 & 45.25 & 84.82 & 85.77 & 95.37 & 89.08 & 60.14 & 88.76 & 74.45 \\\midrule
IPO + LoRA Finetuning & 6.14 & 65.95 & 43.13 & 75.06 & 63.81 & 58.91 & 52.25 & 68.78 & 88.76 & 47.37 & 37.39 & 49.89 & 45.25 & 84.40 & 98.65 & 97.76 & 95.07 & 58.05 & 93.97 & 76.01 \\
BFPO + LoRA Finetuning & 7.77 & 66.25 & 46.41 & 74.90 & 72.20 & 73.00 & 58.40 & 66.53 & 91.38 & 47.08 & 64.35 & 66.06 & 45.70 & 85.82 & 98.85 & 98.23 & 96.03 & 64.36 & 94.73 & 79.54 \\\midrule
BFPO $\alpha=0.1$ & 13.61 & 65.59 & 42.91 & 74.27 & 62.52 & 68.16 & 55.73 & 66.87 & 90.29 & 47.95 & 44.35 & 53.40 & 45.70 & 83.49 & 82.31 & 94.96 & 88.81 & 59.81 & 87.39 & 73.60 \\
BFPO $\alpha=0.3$ & 14.06 & 65.41 & 43.71 & 74.82 & 62.50 & 67.01 & 55.76 & 68.34 & 90.19 & 47.66 & 50.43 & 52.13 & 45.70 & 84.82 & 93.46 & 96.61 & 92.05 & 60.31 & 91.73 & 76.02 \\
BFPO $\alpha=0.5$ & 13.33 & 65.77 & 45.25 & 74.98 & 65.25 & 59.13 & 51.97 & 70.36 & 90.41 & 47.37 & 39.13 & 54.15 & 45.25 & 87.32 & 98.65 & 98.56 & 96.42 & 59.09 & 95.24 & 77.16 \\
BFPO $\alpha=0.7$ & 9.01 & 65.71 & 43.06 & 75.06 & 68.34 & 54.33 & 51.22 & 67.74 & 89.97 & 47.08 & 26.96 & 53.83 & 44.80 & 86.57 & 99.81 & 99.72 & 99.01 & 57.34 & 96.28 & 76.81 \\
BFPO $\alpha=0.9$ & 7.21 & 65.83 & 44.44 & 75.30 & 69.03 & 51.48 & 50.44 & 68.41 & 89.25 & 47.08 & 21.74 & 49.47 & 44.80 & 87.49 & 99.81 & 99.84 & 99.50 & 56.44 & 96.66 & 76.55 \\\midrule
BFPO $\tau=0.01$ & 13.33 & 65.77 & 45.25 & 74.98 & 65.25 & 59.13 & 51.97 & 70.36 & 90.41 & 47.37 & 39.13 & 54.15 & 45.25 & 87.32 & 98.65 & 98.56 & 96.42 & 59.09 & 95.24 & 77.16 \\
BFPO $\tau=0.1$ & 6.4 & 66.01 & 40.64 & 74.82 & 63.78 & 57.10 & 51.78 & 63.66 & 88.50 & 47.37 & 22.61 & 45.11 & 43.89 & 84.82 & 63.65 & 91.01 & 86.31 & 55.44 & 81.45 & 68.44 \\
BFPO $\tau=0.5$ & 6.53 & 66.31 & 38.78 & 74.74 & 62.08 & 54.29 & 51.17 & 60.69 & 86.37 & 48.54 & 17.39 & 43.40 & 44.34 & 84.90 & 50.00 & 91.54 & 86.10 & 54.01 & 78.14 & 66.07 \\
BFPO $\tau=1.0$ & 6.74 & 66.01 & 39.34 & 74.90 & 62.08 & 55.36 & 51.64 & 60.17 & 85.87 & 49.12 & 17.39 & 43.40 & 43.89 & 85.07 & 48.08 & 91.04 & 85.91 & 54.10 & 77.52 & 65.81 \\\midrule
BFPO $B_3=2$ & 9.00 & 65.65 & 45.72 & 75.61 & 70.35 & 52.59 & 50.61 & 68.51 & 89.73 & 46.78 & 38.26 & 54.57 & 45.70 & 85.49 & 99.62 & 98.92 & 97.85 & 58.67 & 95.47 & 77.07 \\
BFPO $B_3=1$ & 11.36 & 65.65 & 44.00 & 74.51 & 65.97 & 65.27 & 54.51 & 70.61 & 90.39 & 47.37 & 46.09 & 53.72 & 45.25 & 85.65 & 99.62 & 98.93 & 96.30 & 60.28 & 95.12 & 77.70 \\
BFPO $B_3=0.5$ & 13.33 & 65.77 & 45.25 & 74.98 & 65.25 & 59.13 & 51.97 & 70.36 & 90.41 & 47.37 & 39.13 & 54.15 & 45.25 & 87.32 & 98.65 & 98.56 & 96.42 & 59.09 & 95.24 & 77.16 \\
BFPO $B_3=0.25$ & 13.15 & 65.59 & 43.96 & 75.06 & 62.88 & 65.46 & 54.59 & 68.64 & 89.83 & 47.66 & 43.48 & 53.19 & 45.25 & 86.07 & 98.46 & 97.85 & 94.62 & 59.63 & 94.25 & 76.94 \\ \bottomrule
\end{tabular}
\end{table}
\end{landscape}

\end{document}